\def\figref#1{fig.~\ref{#1}}
\def\Figref#1{Fig.~\ref{#1}}
\def\secref#1{sec.~\ref{#1}}
\def\eqref#1{eqn.~\ref{#1}}
\newcommand{\Tau}{\mathcal{T}}
\newcommand{\E}{\mathbb{E}}
\DeclareMathOperator*{\argmax}{arg\,max}
\newcommand{\pa}{\mathrm{pa}}    
\newcommand{\doo}{\mathrm{do}}
\newcommand{\CF}{\mathrm{CF}}
\newcommand{\Prior}{\mathrm{Prior}}
\newcommand{\mfp}{\mathfrak{p}}
\newcommand{\model}{\mathcal{M}}
\newcommand{\ie}{i.e.~}
\newcommand{\eg}{e.g.~}
\newcommand{\wrt}{w.r.t.~}
\newtheorem{definition}{Definition}
\newtheorem{lemma}{Lemma}
\newtheorem{corollary}{Corollary}
\newtheorem{observation}{Observation}
\title{Woulda, Coulda, Shoulda:\\ Counterfactually-Guided Policy Search}
\author{Lars Buesing, Th\'eophane Weber, Yori Zwols, \\  S\'ebastien Racani\`ere, Arthur Guez, Jean-Baptiste Lespiau, Nicolas Heess\\
DeepMind\\
\texttt{lbuesing@google.com}}
\begin{document}
\maketitle

\begin{abstract}
Learning policies on data synthesized by models can in principle quench the thirst
of reinforcement learning algorithms for large amounts of real experience, which is often costly to acquire.
However, simulating plausible experience de novo is a hard problem for many complex environments, often
resulting in biases for model-based policy evaluation and search.
Instead of de novo synthesis of data, here we assume logged, real experience and
model alternative outcomes of this experience under \emph{counterfactual} actions, \ie actions that were not actually taken.
Based on this, we propose the Counterfactually-Guided Policy Search (CF-GPS) algorithm
for learning policies in POMDPs from off-policy experience. 
It leverages structural causal models for counterfactual evaluation of arbitrary policies on individual off-policy episodes.
CF-GPS can improve on vanilla model-based RL algorithms by making use of available logged data to de-bias model predictions.
In contrast to off-policy algorithms based on Importance Sampling which re-weight data,
CF-GPS leverages a model to explicitly consider alternative outcomes, allowing the algorithm to make better use of experience data.
We find empirically that these advantages translate into improved policy evaluation and search results on a non-trivial grid-world task.
Finally, we show that CF-GPS generalizes the previously proposed Guided Policy Search and that reparameterization-based algorithms such Stochastic Value Gradient can be interpreted as counterfactual methods.
\end{abstract}

\section{Introduction}

Imagine that a month ago Alice had two job offers from companies $a_1$ and $a_2$. 
She decided to join $a_1$ because of the larger salary, in spite of an awkward feeling during the job interview.
Since then she learned a lot about $a_1$ and recently received information about $a_2$ from a friend, prodding her now to imagine what would have happened had she joined $a_2$.
Re-evaluating her decision in hindsight in this way, she concludes that she made a regrettable decision. 
She could and should have known that $a_2$ was a better choice, had she only interpreted the cues during the interview correctly...
This example tries to illustrate the everyday human capacity to reason about alternate, counterfactual outcomes of past experience with the goal of ``mining worlds that could have been" \citep{pearl2018book}. 
Social psychologists theorize that such cognitive processes are beneficial for improving future decision making \citep{roese1997counterfactual}.
In this paper we aim to leverage possible advantages of counterfactual reasoning for learning decision making in the reinforcement learning (RL) framework.

In spite of recent success, learning policies with standard, model-free RL algorithms can be notoriously data inefficient.
This issue can in principle be addressed by learning policies on data synthesized from a model.
However, a mismatch between the model and the true environment, often unavoidable in practice, 
can cause this approach to fail \citep{talvitie2014model}, 
resulting in policies that do not generalize to the real environment \citep{jiang2015dependence}.
Motivated by the introductory example, we propose the Counterfactually-Guided Policy Search (CF-GPS) algorithm:
Instead of relying on data synthesized \emph{from scratch} by a model, 
we train policies on model predictions of alternate outcomes of past experience from the true environment under \emph{counterfactual} actions,
\ie actions that had not actually been taken, while everything else remaining the same \citep{Pearl:2009:CMR:1642718}.
At the heart of CF-GPS are structural causal models (SCMs) which model the environment with two ingredients \citep{wright1920relative}: 
1) Independent random variables, called \emph{scenarios} here, summarize all aspects of the environment that cannot be influenced by the agent, \eg the properties of the companies in Alice's job search example.
2) Deterministic transition functions (also called \emph{causal mechanisms}) take these scenarios, together with the agent's actions, as input and produce the predicted outcome.
The central idea of CF-GPS is that, instead of running an agent on scenarios sampled de novo from a model,
we infer scenarios in hindsight from given off-policy data, and then evaluate and improve the agent on these \emph{specific} scenarios using given or learned causal mechanisms \citep{balke1994counterfactual}.
We show that CF-GPS generalizes and empirically improves on a vanilla model-base RL algorithm, by mitigating model mismatch via ``grounding" or ``anchoring" model-based predictions in inferred scenarios.
As a result, this approach explicitly allows to trade-off historical data for model bias.
CF-GPS differs substantially from standard off-policy RL algorithms based on Importance Sampling (IS), where historical data is \emph{re-weighted} with respect to the importance weights to evaluate or learn new policies  \citep{precup2000eligibility}.
In contrast, CF-GPS explicitly reasons counterfactually about given off-policy data.
Our main contributions are:
\begin{enumerate}
    \item We formulate model-based RL in POMDPs in terms of structural causal models, thereby connecting concepts from reinforcement learning and causal inference.
    \item We provide the first results, to the best of our knowledge, showing that counterfactual reasoning in structural causal models on off-policy data can facilitate solving non-trivial RL tasks.
    \item We show that two previously proposed classes of RL algorithms, namely Guided Policy Search \citep{levine2013guided} and Stochastic Value Gradient methods \citep{heess2015learning}, can be interpreted as counterfactual methods, opening up possible generalizations.
\end{enumerate}
The paper is structured as follows. 
We first give a self-contained, high-level recapitulation of structural causal models and counterfactual inference, as these are less widely known in the RL and generative model communities.
In particular we show how to model POMDPs with SCMs.
Based on this exposition, we first consider the task of policy evaluation and discuss how we can leverage counterfactual inference in SCMs to improve over naive model-based methods.
We then generalize this approach to the policy search setting resulting in the CF-GPS algorithm.
We close by highlighting connections to previously proposed algorithms and by discussing assumptions and limitations of the proposed method.

\section{Preliminaries}

We denote random variables (RVs) with capital letters, e.g.\ $X$, and particular values with lower caps, e.g.\ $x$. For a distribution $P$ over a vector-valued random variable $X$, we denote the marginal over $Y\subset X$ by $P_Y$ (and density $p_Y$); however
we often omit the subscript if it is clear from the context, e.g.\ as in $Y\sim P$.
We assume the episodic, Partially Observable Markov Decision Process (POMDP) setting with states $S_t$, actions $A_t$
and observations $O_t$, for $t=1,\ldots, T$. For ease of notation, we assume that $O_t$ includes the reward $R_t$.
The undiscounted return is denoted by $G=\sum_{t=1}^TR_t$.
We consider stochastic policies $\pi(a_t\vert h_t)$ over actions conditioned on 
observation histories $H_t=(O_1, A_1, \ldots, A_{t-1}, O_t)$. 
We denote the resulting distribution over trajectories $\Tau=(S_1, O_1, A_1, \ldots, A_{T-1}, S_T, O_T)$ induced by running $\pi$ in the environment with $\Tau\sim\mathfrak{P}^{\pi}$ and the corresponding density by $\mathfrak{p}^\pi(\tau)$.

\subsection{Structural Causal Models}

\begin{definition}[Structural causal model]
A structural causal model (SCM) $\mathcal M$ over $X=(X_1, \ldots, X_N)$ is given by a DAG $\mathcal G$ over nodes $X$, independent noise RVs $U=(U_{1},\ldots, U_N)$ with distributions $P_{U_i}$ and functions $f_1, \ldots, f_N$
such that $X_i=f_i(\pa_i, U_i)$, where $\pa_i\subset X$ are the parents of $X_i$ in $G$. An SCM entails a distribution $P$ with density $p$ over $(X, U)$.
\end{definition}

We also refer to $U$ as scenarios and to $f_i$ as causal mechanisms. We give a (broad) definition of an \emph{intervention} in an SCM. This also includes what is known as stochastic interventions or mechanism changes \citep{korb2004varieties} which generalize atomic interventions \citep{Pearl:2009:CMR:1642718}.
\begin{definition}[Intervention in SCM]
An intervention $I$ in an SCM $\model$ consists of replacing some of the original  $f_i(\pa_i, U_i)$ with other functions $f_i^I(\pa_i^I, U_i)$ where $\pa^I_i$ are the parents in
a new DAG $\mathcal G^I$. We denote the resulting SCM with $\mathcal M^{\doo(I)}$ with distribution $P^{\doo(I)}$ and density $p^{\doo(I)}$.
\end{definition}

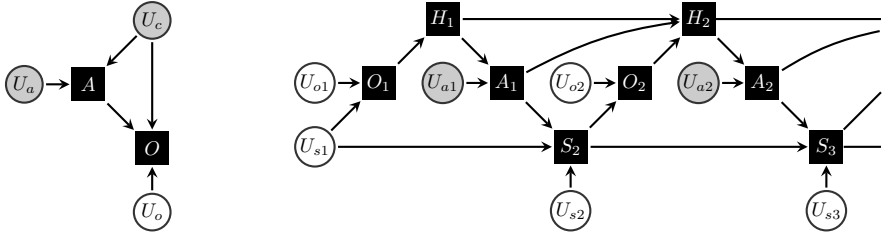
\begin{figure}
\centering
\begin{tikzpicture}[scale=0.85, >=stealth]
\tikzstyle{empty}=[]
\tikzstyle{lat}=[circle, inner sep=1pt, minimum size = 6.5mm, thick, draw =black!80, node distance = 20mm, scale=0.75]
\tikzstyle{obs}=[circle, fill =black!20, inner sep=1pt, minimum size = 6.5mm, thick, draw =black!80, node distance = 20mm, scale=0.75]
\tikzstyle{det}=[rectangle, fill =black, inner sep=1pt, text=white, minimum size = 6mm, draw=none, node distance = 20mm, scale=0.75]
\tikzstyle{connect}=[-latex, thick]
\tikzstyle{undir}=[thick]
\tikzstyle{directed}=[->, thick, shorten >=0.5 pt, shorten <=1 pt]

\node[obs] at (3,3) (uc) {$U_c$};
\node[det] at (2,2) (a) {$A$};
\node[obs] at (1,2) (ua) {$U_a$};
\node[lat] at (3,0) (uo) {$U_o$};
\node[det] at (3,1) (o) {$O$};
\path   (uc) edge [directed] (o)
        (uc) edge [directed] (a)
        (ua) edge [directed] (a)
        (a) edge [directed] (o)
        (uo) edge [directed] (o)
        ;
\end{tikzpicture}
\begin{tikzpicture}[scale=0.85, >=stealth]
\tikzstyle{empty}=[]
\tikzstyle{lat}=[circle, inner sep=1pt, minimum size = 6.5mm, thick, draw =black!80, node distance = 20mm, scale=0.75]
\tikzstyle{obs}=[circle, fill =black!20, inner sep=1pt, minimum size = 6.5mm, thick, draw =black!80, node distance = 20mm, scale=0.75]
\tikzstyle{det}=[rectangle, fill =black, inner sep=1pt, text=white, minimum size = 6mm, draw=none, node distance = 20mm, scale=0.75]
\tikzstyle{connect}=[-latex, thick]
\tikzstyle{undir}=[thick]
\tikzstyle{directed}=[->, thick, shorten >=0.5 pt, shorten <=1 pt]

\node[empty] at (-2,2) (hm1) [] {};
\node[lat] at (0,0) (s0) {$U_{s1}$};
\node[det] at (1,1) (o0) {$O_1$};
\node[lat] at (0,1) (uo0) {$U_{o1}$};
\node[det] at (2,2) (h0) {$H_1$};
\node[det] at (3,1) (a0) {$A_1$};
\node[obs] at (2,1) (ua0) {$U_{a1}$};
\node[det] at (4,0) (s1) {$S_2$};
\node[lat] at (4,-1) (us1) {$U_{s2}$};
\node[det] at (5,1) (o1) {$O_2$};
\node[lat] at (4,1) (uo1) {$U_{o2}$};
\node[det] at (6,2) (h1) {$H_2$};
\node[det] at (7,1) (a1) {$A_2$};
\node[obs] at (6,1) (ua1) {$U_{a2}$};
\node[det] at (8,0) (s2) {$S_3$};
\node[lat] at (8,-1) (us2) {$U_{s3}$};
\node[empty] at (9,2) (h2) [] {};
\node[empty] at (9,1.85) (h2a) [] {};
\node[empty] at (9,1) (o2) [] {};
\node[empty] at (10,2) (a2) [] {};
\node[empty] at (9,0) (s3) [] {};
\path   (s0) edge [directed] (s1)
        (s1) edge [directed] (s2)
        (s2) edge [-, thick] (s3)
        (s2) edge [-, thick] (o2)
        (s0) edge [directed] (o0)
        (o0) edge [directed] (h0)
        (uo0) edge [directed] (o0)
        (ua0) edge [directed] (a0)
        (s1) edge [directed] (o1)
        (o1) edge [directed] (h1)
        (h0) edge [directed] (a0)
        (a0) edge [directed] (s1)
        (h1) edge [directed] (a1)
        (a1) edge [directed] (s2)
        (a0) edge [directed, bend left=10] (h1)
        (a1) edge [directed, bend left=10, -] (h2a)
        (us1) edge [directed] (s1)
        (uo1) edge [directed] (o1)
        (ua1) edge [directed] (a1)
        (us2) edge [directed] (s2)
        (h0) edge [directed] (h1)
        (h1) edge [-, thick] (h2)
        ;
\end{tikzpicture}
\caption{
    Structural causal models (SCMs) model environments using random variables $U$ (circles, `scenarios'), 
    that summarize immutable aspects, some of which are observed (grey), some not (white). 
    These are fed into deterministic functions $f_i$ (black squares) that approximate causal mechanisms. 
    {\bf Left:} SCM for a contextual bandit with context $U_c$, action $A$, feedback $O$ and scenario $U_o$.
    {\bf Right:} SCM for a POMDP, with initial state $U_{s1}=S_1$, states $S_t$ and histories $H_t$.
    The mechanism that generates the actions $A_t$ is the policy $\pi$.
    }
    \label{fig:SCM}
\end{figure}

\paragraph{SCM representation of POMDPs}\label{sec:SCM_POMDP}

We can represent any given POMDP (under a policy $\pi$) by an SCM $\model$ over trajectories $\mathcal T$ in the following way.
We express all conditional distributions, e.g.\ the transition kernel  $P_{S_{t+1}\vert S_t, A_t}$, as deterministic functions with independent noise variables $U$, such as $S_{t+1}=f_{st}(S_t, A_t, U_{st})$.
This is always possible using auto-regressive uniformization, see Lemma \ref{lem:auto-reg} in the appendix.
The DAG $\mathcal G$ of the resulting SCM  is shown in \figref{fig:SCM}.
This procedure is closely related to the `reparameterization trick' for models with location-scale distributions \citep{kingma2013auto, rezende2014stochastic}.
We denote the distribution over $\mathcal T$ entailed by the SCM with $P^\pi$ and its density by $p^\pi$ to highlight the role of $\pi$;
note the difference to the true environment distribution $\mathfrak{P}^\pi$ with density $\mathfrak{p}^\pi$.
Running a different policy $\mu$ instead of $\pi$ in the environment can be expressed as an 
intervention $I(\pi\rightarrow\mu)$ consisting of replacing $A_t=f_\pi(H_t, U_{at})$ by $A_t=f_\mu(H_t, U_{at})$.
We denote the resulting model distribution over trajectories with $P^{\doo(I(\pi\rightarrow\mu))}=P^\mu$ (analogously $p^\mu$).

\paragraph{Intuition}
Here, we illustrate the main advantage of SCMs using the example of Alice's job choice from the introduction.
We model it as contextual bandit with feedback shown in \figref{fig:SCM}.
Alice has some initial knowledge given by the context $U_c$ that is available to her before taking action $A$ of joining company $A=a_1$ or $A=a_2$.
We model Alice's decision as  $A=f_\pi(U_c, U_a)$, where $U_a$ captures potential indeterminacy in Alice's decision making.
The outcome $O=f_o(A, U_c, U_o)$ also depends on the scenario $U_o$, capturing all relevant, unobserved and highly complex properties of the two companies such as working conditions etc.
Given this model, we can reason about alternate outcomes $f_o(a_1, u_c, u_o)$ and $f_o(a_2, u_c, u_o)$ 
for \emph{same} the scenario $u_o$.
This is not possible if we only model the outcome on the level of the conditional distribution $P_{O\vert A, U_c}.$

\subsection{Counterfactual inference in SCMs}\label{sec:SCM_CF}

For an SCM over $X$, we define a \emph{counterfactual query} as a triple $(\hat x_o, I, X_q)$ of observations $\hat x_o$ of some variables $X_o\subset X$, an intervention $I$ and query variables $X_q\subset X$. 
The semantics of the query are that, having observed $\hat x_o$, we want to infer what $X_q$ would have been had we done intervention $I$, 
while `\emph{keeping everything else the same}'. 
Counterfactual inference (CFI) in SCMs answers the query in the following way \citep{balke1994counterfactual}:
\begin{enumerate}
    \item Infer the unobserved noise source $U$ conditioned on the observations $\hat x_o$, i.e.\ compute $p(U\vert \hat x_o)$ and
    replace the prior $p(U)$ with $p(U\vert \hat x_o)$. Denote the resulting SCM by $\mathcal M_{\hat x_o}$.
    \item Perform intervention $I$ on $\mathcal M_{\hat x_o}$. This yields $\mathcal M^{\doo(I)}_{\hat x_o}$, which entails the counterfactual distribution $p^{\doo(I)\vert\hat x_o}(x)$. Return the marginal $p^{\doo(I)\vert \hat x_o}(x_q)$.
\end{enumerate} 
Note that our definition explicitly allows for partial observations $X_o\subset X$ in accordance with \citet{Pearl:2009:CMR:1642718}.
A sampled-based version, denoted as CFI, is presented in Algorithm \ref{alg:everything_cf}.
An interesting property of the counterfactual distribution $p^{\doo(I)\vert \hat x_o}$ is that marginalizing it over observations $\hat x_o$ yields an unbiased estimator of the density of $X_q$ under intervention $I$.

\begin{lemma}[CFI for simulation]\label{lem:cf_unbiased}
Let observations $\hat x_o\sim p$ come from a SCM $\model$ with density $p$. Then the counterfactual density $p^{\doo(I)\vert \hat x_o}$ is an unbiased estimator of $p^{\doo(I)}$, i.e.\ 
\begin{eqnarray*}
\mathbb E_{\hat x_o\sim p}[p^{\doo(I)\vert \hat x_o}(x)] &=& p^{\doo(I)}(x)
\end{eqnarray*}
\end{lemma}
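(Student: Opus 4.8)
The plan is to exploit the defining feature of an SCM that, once the noise $U$ is fixed to a value $u$, every variable $X_i$ -- and hence the whole vector $X$ -- becomes a deterministic function of $u$, and crucially that an intervention $I$ replaces only the mechanisms $f_i$, never the noise prior $P_U$. Thus the intervened model $\model^{\doo(I)}$ carries the \emph{same} prior $p(U)$ together with a (generally different) deterministic solution map $u\mapsto g^I(u)$ obtained by composing the intervened mechanisms along $\mathcal G^I$. Writing the interventional density as a mixture over the noise, $p^{\doo(I)}(x)=\int p^{\doo(I)}(x\vert u)\,p(u)\,du$ where $p^{\doo(I)}(x\vert u)=\delta(x-g^I(u))$ is a deterministic kernel, the point is that the counterfactual density produced by the two-step CFI recipe is the \emph{same} mixture against the noise \emph{posterior} rather than the prior: $p^{\doo(I)\vert\hat x_o}(x)=\int p^{\doo(I)}(x\vert u)\,p(u\vert\hat x_o)\,du$, because step~1 only swaps $p(U)$ for $p(U\vert\hat x_o)$ and step~2 applies $I$, leaving the deterministic kernel $p^{\doo(I)}(x\vert u)$ untouched.

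Given this representation the rest is a short exchange-of-integration argument. I would substitute the mixture form into the left-hand side and swap the order of integration:
\begin{align*}
\E_{\hat x_o\sim p}\!\left[p^{\doo(I)\vert\hat x_o}(x)\right]
=\int p^{\doo(I)}(x\vert u)\left(\int p(\hat x_o)\,p(u\vert\hat x_o)\,d\hat x_o\right)du .
\end{align*}
The inner integral is $\int p(\hat x_o)\,p(u\vert\hat x_o)\,d\hat x_o=\int p(\hat x_o,u)\,d\hat x_o=p(u)$: averaging the noise posterior over the marginal law of the evidence returns the prior (the tower property). This is exactly where the hypotheses enter -- that $\hat x_o$ is drawn from the SCM's own marginal $p$, and that $p(U)$ is the common prior of $\model$ and $\model^{\doo(I)}$. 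Substituting back gives $\int p^{\doo(I)}(x\vert u)\,p(u)\,du=p^{\doo(I)}(x)$, as claimed; restricting both sides to the marginal over $X_q$ recovers the statement for the query variables. Phrased via pushforwards, the same argument is the one-liner $\E_{\hat x_o}\!\left[g^I_\#\,P_{U\vert\hat x_o}\right]=g^I_\#\,\E_{\hat x_o}\!\left[P_{U\vert\hat x_o}\right]=g^I_\#\,P_U=P^{\doo(I)}$, using linearity of the pushforward.

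The one place calling for care -- and the step I expect to be the main obstacle -- is making the mixture representation of $p^{\doo(I)\vert\hat x_o}$ and the exchange of integrals rigorous at the level of densities, since $p^{\doo(I)}(x\vert u)$ is a degenerate Dirac kernel; handling partial observations $X_o\subsetneq X$ needs only a remark, as $\int p_{X_o}(\hat x_o)\,p(u\vert\hat x_o)\,d\hat x_o=\int p(u,\hat x_o)\,d\hat x_o=p_U(u)$ holds verbatim when $X_o$ is a strict subset. I would therefore run the argument directly in terms of distributions and pushforwards, where linearity and the tower property are clean, and only translate to the density notation of the statement at the very end.
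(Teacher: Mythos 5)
Your proposal is correct and follows essentially the same route as the paper's proof: write the interventional density as a mixture over the noise prior, use that the intervention leaves $p(U)$ unchanged, and recover the prior by averaging the noise posterior over $p(\hat x_o)$ via an exchange of integration. The paper simply runs the chain of equalities from right to left; your pushforward remark is a restatement of the same argument.
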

The proof is straightforward and outlined in the Appendix \ref{sec:app_proofs}.
This lemma and the marginal independence of the $U_i$ leads to the following corollary; the proof is given in the appendix.
\begin{corollary}[Mixed counterfactual and prior simulation from an SCM]\label{cor:cf_prior_mix}
Assume we have observations $\hat x_o\sim p$. We can simulate from $\mathcal M$, under any intervention $I$, i.e.\ obtain unbiased samples from $\mathcal M^{\doo(I)}$, 
by first sampling values $u_\CF$ for an arbitrary subset $U_\CF\subset U$ from the posterior $p(u_\CF\vert \hat x_o)$ 
and the remaining $U_\Prior:=U \backslash U_\CF$ from the prior $p(u_\Prior)$, and then computing $X$ with noise $u=u_\CF\cup u_\Prior$.
\end{corollary}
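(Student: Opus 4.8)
The plan is to show that the law of $X$ produced by the mixed sampling scheme, averaged over the draw $\hat x_o\sim p$, equals $p^{\doo(I)}$; this is exactly the argument behind Lemma~\ref{lem:cf_unbiased}, of which the corollary is the special case $U_\CF=U$. Write $F^I$ for the deterministic map $u\mapsto x$ obtained by composing the post-intervention mechanisms of $\model^{\doo(I)}$ along the DAG $\mathcal G^I$, so that $\model^{\doo(I)}$ entails the pushforward of the prior $p(u)$ under $F^I$, \ie $p^{\doo(I)}(x)=\int p(u)\,\mathbbm 1[x=F^I(u)]\,du$ (reading $\mathbbm 1$ as the appropriate Dirac/Jacobian factor when densities are needed).

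First I would write the density $q$ of the proposed scheme explicitly. Given $\hat x_o$, the scheme draws $u_\CF\sim p(u_\CF\mid\hat x_o)$ and, independently, $u_\Prior\sim p(u_\Prior)$, and returns $X=F^I(u_\CF\cup u_\Prior)$, so
\begin{equation*}
q(x)=\int p(\hat x_o)\!\int p(u_\CF\mid\hat x_o)\,p(u_\Prior)\,\mathbbm 1\!\left[x=F^I(u_\CF\cup u_\Prior)\right]\,du_\CF\,du_\Prior\,d\hat x_o.
\end{equation*}
Next I would use Fubini (everything is nonnegative) to integrate over $\hat x_o$ first, invoking the law of total probability $\int p(\hat x_o)\,p(u_\CF\mid\hat x_o)\,d\hat x_o=p(u_\CF)$, the marginal of $U_\CF$ under $\model$. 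This leaves $\int p(u_\CF)\,p(u_\Prior)\,\mathbbm 1[x=F^I(u_\CF\cup u_\Prior)]\,du_\CF\,du_\Prior$. Finally, since the $U_i$ are mutually independent, $U_\CF$ and $U_\Prior$ are independent, so $p(u_\CF)\,p(u_\Prior)=p(u)$ with $u=u_\CF\cup u_\Prior$; substituting gives $q(x)=\int p(u)\,\mathbbm 1[x=F^I(u)]\,du=p^{\doo(I)}(x)$. Marginalising over $X\setminus X_q$ then yields the statement for the query variables $X_q$.

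I expect the only delicate point to be bookkeeping rather than substance: one must keep track of the fact that in the scheme $U_\Prior$ is drawn from the \emph{unconditional} prior $p(u_\Prior)$ (not $p(u_\Prior\mid\hat x_o)$ or $p(u_\Prior\mid u_\CF)$), since it is precisely this, combined with $U_\CF\perp U_\Prior$, that makes the recombined $u$ carry the prior law $p(u)$. If a fully measure-theoretic statement is wanted rather than a density-level one, the Dirac manipulations should be replaced by the corresponding pushforward identities and Fubini applied to the nonnegative kernels directly. A slightly slicker equivalent route: observe that the pair $(\hat x_o,u_\CF)$ drawn by the scheme has exactly the $\model$-marginal of $(X_o,U_\CF)$, hence $u_\CF\sim p(u_\CF)$; resampling $u_\Prior$ from its prior reconstructs a sample $u\sim p(u)$ by independence, and applying $F^I$ concludes.
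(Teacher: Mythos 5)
Your proof is correct and follows essentially the same route as the paper's: factor the prior as $p(u_\CF)\,p(u_\Prior)$ by independence of the noise variables, and use the total-probability identity $\mathbb E_{\hat x_o\sim p}[p(u_\CF\mid\hat x_o)]=p(u_\CF)$ (the same marginalisation underlying Lemma~\ref{lem:cf_unbiased}) to see that the recombined $u$ carries the prior law. Your write-up is in fact slightly more complete, since you explicitly push the resulting $u$ through the intervened mechanisms to conclude for the law of $X$, a step the paper leaves implicit.
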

The corollary essentially states that we can sample from the model $\model^I$, by sampling some of the $U_i$ from the prior, and inferring the rest
from data $\hat x_o$ (as long as the latter was also sampled from $\model$).
We will make use of this later for running a POMDP model on scenarios $U_{st}$ inferred from data while randomizing the action noise $U_{at}$. 
We note that the noise variables $U_\CF$ from the posterior $P_{U_\CF\vert \hat x_o}$ are not independent anymore.
Nevertheless, SCMs with non-independent noise distributions arising from counterfactual inference, denoted here by $\model_{\hat x_o}$, are commonly considered in the literature \citep{PetJanSch17}.

\paragraph{Intuition}
Returning to Alice's job example from the introduction, we give some intuition for counterfactual inference in SCMs.
Given the concrete outcome $\hat o$, under observed context $\hat u_c$ and having joined company $\hat a=a_1$, Alice can try to infer the underlying scenario $u_o\sim p(u_o\vert a_1, \hat u_c, \hat o)$ that she experiences; this includes factors such as work conditions etc.
She can then reason counterfactually about the outcome had she joined the other company, which is given by $f_o(a_2, \hat u_c, u_o)$.
This can in principle enable her to make better decisions in the future in similar scenarios 
by changing her policy $f_\pi(A, U_c, U_a)$ such that the action with the preferred outcome becomes more likely under $\hat u_c, u_o$.
In particular she can do so without having to use her (likely imperfect) prior model over possible companies $p(U_o)$. 
She can use the counterfactual predictions discussed above instead to learn from her experience.
We use this insight for counterfactual policy evaluation and search below.

\section{Off-policy evaluation: Model-free, model-based and counterfactual}

To explain how counterfactual reasoning in SCMs can be used for policy search, we first consider the simpler problem of policy evaluation (PE) on off-policy data.
The goal of off-policy PE is to determine the value of a policy $\pi$, \ie its expected return $\mathbb E_{\mfp^\pi}[G]$, without running the policy itself. 
We assume that we have data $D=\lbrace \hat h^i_T \rbrace_{i=1,\ldots,N}$ consisting of logged episodes $\hat h^i_T=(\hat o^i_1, \hat a^i_1, \ldots \hat a^i_{T-1}, \hat o^i_T)$ from running a behavior policy $\mu$.
A standard, model-free approach to PE is to use Importance sampling (IS):
We can estimate the policy's value as $\sum_iw^i\hat G^i$, where $\hat G^i$ is the empirical return of $\hat h^i_T$ and $w^i\propto \frac{\mfp^\pi(\hat h^i_T)}{\mfp^\mu(\hat h^i_T)}$ are importance weights. 
However, if the trajectory densities $\mfp^\pi$ and $\mfp^\mu$ are very different, then this estimator has large variance. 
In the extreme case, IS can be useless if the support of $\mfp^\mu$ does not contain that of $\mfp^\pi$, irrespective of how much data from $\mfp^\mu$ is available. 

If we have access to a model $\model$, then we can evaluate the policy on synthetic data, \ie we can estimate $\mathbb E_{p^\pi}[G]$. 
This is called model-based policy evaluation (MB-PE).
However, any bias in $\model$ propagates from $p^\pi$ to the estimate $\E_{p^\pi}[G]$.
In the following, we assume that $\model$ is a SCM and we show that we can use counterfactual reasoning for off-policy evaluation (CF-PE).
As the main result for this section, we argue that we expect CF-PE to be less biased than MB-PE, and we illustrate this point with experiments.

\subsection{Counterfactual off-policy evaluation}\label{sec:CF-PE}

\begin{algorithm}[t] \small
    \caption{Counterfactual policy evaluation and search}
    \label{alg:everything_cf}
    \begin{algorithmic}[1] 
        \Statex // Counterfactual inference (CFI)
        \Procedure{CFI}{data $\hat x_o$, SCM $\model$, intervention $I$, query $X_q$}
            \State $\hat u\sim p(u\vert\hat x_o)$ \Comment{Sample noise variables from posterior}
            \State $p(u)\leftarrow \delta (u-\hat u)$ \Comment{Replace noise distribution in $p$ with $\hat u$}
            \State $f_i\leftarrow f_i^I$ \Comment{Perform intervention $I$}
            \State \textbf{return} $x_q\sim p^{\doo(I)}(x_q\vert \hat u)$ \Comment{Simulate from the resulting model $\model^I_{\hat x_o}$}
        \EndProcedure    
        \newline
        \Statex // Counterfactual Policy Evaluation (CF-PE)
        \Procedure{CF-PE}{SCM $\model$, policy $\pi$, replay buffer $D$, number of samples $N$}
            \For{$i\in \lbrace 1,\ldots N\rbrace$} 
                \State $\hat h^i_T\sim D$  \Comment{Sample from the replay buffer}
                \State $g_i=\mathrm{CFI}(\hat h^i_T, \model, I(\mu\rightarrow\pi), G)$ \Comment{Counterfactual evaluation of return}
            \EndFor
            \State \textbf{return} $\frac{1}{N}\sum_{i=1}^N g_i$
        \EndProcedure
        \newline
        \Statex // Counterfactually-Guided Policy Search (CF-GPS)  
        \Procedure{CF-GPS}{SCM $\model$, initial policy $\pi^0$, number of trajectory samples $N$}
            \For{$k=1,\ldots$}        
                \If{sometimes}
                    \State $\mu\leftarrow \pi^k$ \Comment{Update behavior policy}
                \EndIf
                \For{$i=1,\ldots, N$}
                    \State $\hat h_T^{i}\sim\mathfrak{p}^{\mu}$ \Comment{Get off-policy data from the true environment}
                    \State $\tau^{i}=\mathrm{CFI}(\hat h_T^i, \model, I(\mu\rightarrow \pi^\lambda), \mathcal T)$ \Comment{Counterfactual rollouts under planner}
                \EndFor
                \State $\pi^{k}\leftarrow $   policy improvement on trajectories $\tau^{i=1,\ldots, N}$ using \eqref{eqn:CF-GPS_objective}
            \EndFor
        \EndProcedure
    \end{algorithmic}
\end{algorithm}

Naive MB-PE with a SCM $\model $ simply consist of sampling the scenarios $U\sim P_U$ from the prior, and then simulating a trajectory $\tau$ from the functions $f_i$ and computing its return.
However, given data $D$ from $\mfp^\mu$, our discussion of counterfactual inference in SCMs suggests the following alternative strategy:
Assuming no model mismatch, i.e.\ $\mfp^\mu=p^\mu$, we can regard the task of off-policy evaluation of $\pi$ as a counterfactual query with data $\hat h^i_T$, intervention $I(\mu\rightarrow\pi)$ and query variable $G$. 
In other words, instead of sampling from the prior as in MB-PE, we are free to the scenarios from the posterior $u^i\sim p^\mu(\cdot\vert\hat h^i_T)$.
The algorithm is given in Algorithm \ref{alg:everything_cf}.
Lemma \ref{lem:cf_unbiased} guarantees that this results in an unbiased estimate:
\begin{corollary}[CF-PE is unbiased]\label{cor:CF-PE_unbiased}
Assuming no model mismatch, CF-PE is unbiased.
\end{corollary}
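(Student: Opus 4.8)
The plan is to obtain this corollary as an essentially immediate consequence of Lemma~\ref{lem:cf_unbiased}; the only real work is to identify the SCM, intervention and query variable to which the lemma is applied, and to check exactly where the no-model-mismatch hypothesis is used.

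\textbf{Step 1: what CF-PE samples.} Let $\model$ denote the SCM with the data-collecting action mechanism $f_\mu$ installed, so that its trajectory density is $p^\mu$. One iteration of CF-PE draws a logged history $\hat h_T\sim D$ and then calls $\mathrm{CFI}(\hat h_T, \model, I(\mu\rightarrow\pi), G)$. By the two-step semantics of counterfactual inference in Section~\ref{sec:SCM_CF} (abduction of $\hat u\sim p^\mu(u\vert\hat h_T)$, then the intervention $I(\mu\rightarrow\pi)$ replacing $A_t=f_\mu(H_t,U_{at})$ by $A_t=f_\pi(H_t,U_{at})$, then forward simulation), the returned value $g$ is a sample from the counterfactual marginal $p^{\doo(I(\mu\rightarrow\pi))\vert\hat h_T}(g)$. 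Consequently each $g_i$, viewed as a random variable over both the buffer draw and the internal randomness of $\mathrm{CFI}$, is distributed according to the mixture $\E_{\hat h_T\sim\mfp^\mu}\!\big[p^{\doo(I(\mu\rightarrow\pi))\vert\hat h_T}(g)\big]$.

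\textbf{Step 2: apply Lemma~\ref{lem:cf_unbiased}.} Under no model mismatch we have $\mfp^\mu=p^\mu$, so the logged histories are exactly observations drawn from the SCM $\model$ with density $p^\mu$. This is precisely the hypothesis of Lemma~\ref{lem:cf_unbiased} with $\hat x_o=\hat h_T$, $I=I(\mu\rightarrow\pi)$ and $X_q=G$, which yields
\[
\E_{\hat h_T\sim p^\mu}\!\big[p^{\doo(I(\mu\rightarrow\pi))\vert\hat h_T}(g)\big] \;=\; p^{\doo(I(\mu\rightarrow\pi))}(g) \;=\; p^\pi(g),
\]
the last equality because performing $I(\mu\rightarrow\pi)$ on $\model$ turns it into the SCM representation of the POMDP under $\pi$. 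Integrating $g$ against this identity gives $\E[g_i]=\E_{g\sim p^\pi}[g]=\E_{p^\pi}[G]$. Since the intervention $I(\mu\rightarrow\pi)$ leaves every environment mechanism $f_{st},f_{ot}$ untouched, ``no model mismatch'' for $\mu$ also gives $p^\pi=\mfp^\pi$, hence $\E[g_i]=\E_{\mfp^\pi}[G]$. Finally, $\frac{1}{N}\sum_{i=1}^N g_i$ is an average of $N$ i.i.d.\ copies of $g_i$, so by linearity of expectation it is unbiased as well.

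\textbf{Main obstacle.} There is no hard step: once the mixture identity of Step~1 is written down, everything reduces to Lemma~\ref{lem:cf_unbiased}. The one point that needs care is the bookkeeping around ``no model mismatch'': the lemma only needs the data to come from the SCM run under the \emph{behaviour} policy, but the statement is phrased in terms of $\mfp^\pi$, so I must note separately --- immediate for an SCM built by auto-regressive uniformization of the true POMDP kernels (Lemma~\ref{lem:auto-reg}) --- that matching the environment dynamics under one policy matches them under any policy, because the policy enters only through the replaceable mechanism $f_\pi$. A secondary, purely measure-theoretic point is the interchange of the return expectation with the expectation over $\hat h_T$, justified by Tonelli exactly as in the proof of Lemma~\ref{lem:cf_unbiased}.
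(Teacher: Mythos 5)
Your proposal is correct and takes essentially the same route as the paper, which derives the corollary directly from Lemma~\ref{lem:cf_unbiased} by taking $\hat x_o=\hat h_T$, $I=I(\mu\rightarrow\pi)$ and $X_q=G$; your extra bookkeeping (that no model mismatch both makes the logged data a sample from $p^\mu$ and identifies $p^\pi$ with $\mfp^\pi$) is exactly the implicit content of the paper's one-line justification.
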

Furthermore, Corollary \ref{cor:cf_prior_mix} allows us to also sample some of the noise variables from the prior instead of the posterior, we can \eg randomize the counterfactual actions by re-sampling the action noise $U_a$.
 
\paragraph{Motivation} 
When should one prefer CF-PE over the more straightforward MB-PE?
Assuming a perfect model, Corollary \ref{cor:CF-PE_unbiased} states that both yield the same answer in expectation for perfect models. 
For imperfect models however, these algorithms can differ substantially. 
MB-PE relies on purely synthetic data, sampled from the noise distribution $p(U)$. 
In practice, this is usually approximated by a parametric density model, which can lead to under-fitting in case of complex distributions. 
This is a well-known effect in generative models with latent variables: 
In spite of recent research progress, \eg models of natural images are still unable to accurately model the variability of the true data \citep{gregor2016towards}.
In contrast, CF-PE samples from the posterior $N^{-1}\sum_{i=1}^Np^\mu(U\vert \hat h^i_T)$, which has access to strictly more information than the prior $p(U)$ by taking into account additional data $\hat h^i_T$.
This semi-nonparametric distribution can help to de-bias the model by effectively winnowing out parts of the domain of $U$ which do not correspond to any real data.
We substantiate this intuition with experiments below; a concrete illustration for the difference between the prior and posterior / counterfactual distribution is given in \figref{fig:posterior_vis} in the appendix and discussed in appendix \ref{sec:posterior_analysis}.
Therefore, we conclude that we expect CF-PE to outperform MB-PE, if the transition and reward kernels $f_{st}$ are accurate models of the environment dynamics, but if the marginal distribution over the noise sources $P_U$ is difficult to model.

\subsection{Experiments}\label{sec:CF-PE_experiments}

\paragraph{Environment}

As an example, we use a partially-observed variant of the SOKOBAN environment, which we call PO-SOKOBAN.
The original SOKOBAN puzzle environment was described in detail by \cite{racaniere2017imagination}; we give a brief summary here. 
The agent is situated in a $10\times10$ grid world and its five actions are to move to one of four adjacent tiles and a NOOP. In our variant, the goal is to push all three boxes onto the three targets. As boxes cannot be pulled, many actions result irreversibly in unsolvable states. 
Episodes are of length $T=50$, and pushing a box onto a target yields a reward of $1$, removing a box from a target yields $-1$, and solving a level results in an additional reward of $10$. The state of the environment consists in a $10\times10$ matrix of categorical variables taking values in $\lbrace 0,\ldots, 6\rbrace$ indicating if the corresponding tile is empty, a wall, box, target, agent, or a valid combinations thereof (box+target and agent+target).
In order to introduce partial observability, we define the observations as the state corrupted by i.i.d.\ (for each tile and time step) flipping each categorical variable to the ``empty" state with probability $0.9$.
Therefore, the state of the game is largely unobserved at any given time, and a successful agent has to integrate observations over tens of time steps.
Initial states $U_{s1}$, also called \emph{levels}, which are the scenarios in this environment, are generated randomly by a generator algorithm which guarantees solvability (i.e. all boxes can be pushed onto targets). 
The environment is visualized in \figref{fig:partoban} in the appendix.

Given the full state of PO-SOKOBAN, the transition kernel is deterministic and quite simple as only the agent and potentially an adjacent box moves. Inferring the belief state, i.e.\ the distribution over states given the history of observations and actions, can however range from trivial to very challenging, depending on the amount of available history.
In the limit of a long observed history, every tile is eventually observed and the belief state concentrates on a single state (the true state) that can be easily inferred. With limited observed history however, inferring the posterior distribution over states (belief state) is very complex.
Consider \eg the situation in the beginning of an episode (before pushing the first box).
Only the first observation is available, however we know that all PO-SOKOBAN levels are initially guaranteed to be solvable and therefore satisfy many combinatorial constraints reflecting that the agent is still able to push all boxes onto targets.
Learning a compact parametric model of the initial state distribution from empirical data is therefore difficult and likely results in large mismatch between the learned model and the true environment.

\paragraph{Results}
To illustrate the potential advantages of CF-PE over MB-PE we perform policy evaluation in the PO-SOKOBAN environment.
We first generate a policy $\pi$ that we wish to evaluate, by training it using a previously-proposed distributed RL algorithm \citep{espeholt2018impala}. 
The policy is parameterized as a deep, recurrent neural network consisting of a 3-layer deep convolutional LSTM \citep{xingjian2015convolutional} with 32 channels per layer and kernel size of 3. To further increase computational power, the LSTM ticks twice for each environment step. The output of the agent is a value function and a softmax yielding the probabilities of taking the 5 actions. 
In order to obtain an SCM of the environment, for the sake of simplicity, we assume that the ground-truth transition, observation and reward kernels are given.
Therefore the only part of the model that we need to learn is the distribution $p(U_{s1})$ of initial states $S_1=U_{s1}$ (for regular MB-PE), and the   density $p(U_{s1}\vert \hat h^i_t)$ for inferring levels in hindsight for CF-PE. 
We vary the amount of true data $t$ that we condition this inference on, ranging from $t=0$ (no real data, equivalent to MB-PE) to $t=T=50$ (a full episode of real data is used to infer the initial state $U_{s1}$). 
We train a separate model for each $t\in\lbrace0, 5, 10, 20, 30, 40 , 50\rbrace$.
To simplify model learning, both models were given access to the unobserved state during training, but not at test time.
The models are chosen to be powerful, multi-layer, generative DRAW models \citep{gregor2015draw} trained by approximate maximum likelihood learning \citep{kingma2013auto,rezende2014stochastic}. 
The models take as input the (potentially empty) data $\hat h^i_t$ summarized by a backward RNN (a standard convolutional LSTM model with 32 units).
The model is shown in \figref{fig:partoban} in the appendix and additional details are given in appendix \ref{sec:model_policy_details}.
The data $\hat h^i_T$ was collected under a uniform random policy $\mu$.
For all policy evaluations, we use $\approx>10^5$ levels $u^i$ from the inferred model. 
In order to evaluate policies of different proficiency, we derive from the original (trained) $\pi$ three policies $\pi_0, \pi_1 ,\pi_2$ ranging from almost perfect to almost random performance by introducing additional stochasticity during action selection. 

The policy evaluation results are shown in fig.\ \ref{fig:CF-PE_experiments}. 
We found that for $t=0$, in spite of extensive hyper-parameter search, the model $p(U_{s1})$ was unable to accurately capture the marginal distribution of initial levels in PO-SOKOBAN. 
As argued above, a solvable level satisfies a large number of complex constraints that span the entire grid world, which are hard for a parametric model to capture. Empirically, we found that the model mismatch manifested itself in samples from $p(U_{s1})$ not being well-formed, \eg not solvable, and hence the performance of the policies $\pi_i$ are very different on these synthetic levels compared to levels sampled form $\mathfrak{p}$.
However, inferring levels from full observed episodes \ie $p(U_{s1}\vert \hat h^i_{50})$ was reliable, and running $\pi$ on these resulted in accurate policy evaluation.
The figure also shows the trade-off between policy evaluation accuracy and the amount of off-policy data for intermediate amounts of the data $\hat h^i_t$. We also want to emphasize that in this setting, model-free policy evaluation by IS fails.
The uniform behavior policy $\mu$ was too different from $\pi^i$, resulting in a relative error $>0.8$ for all $i=1,2,3$.

\begin{figure}[t]
    \centering
    \includegraphics[scale=0.4]{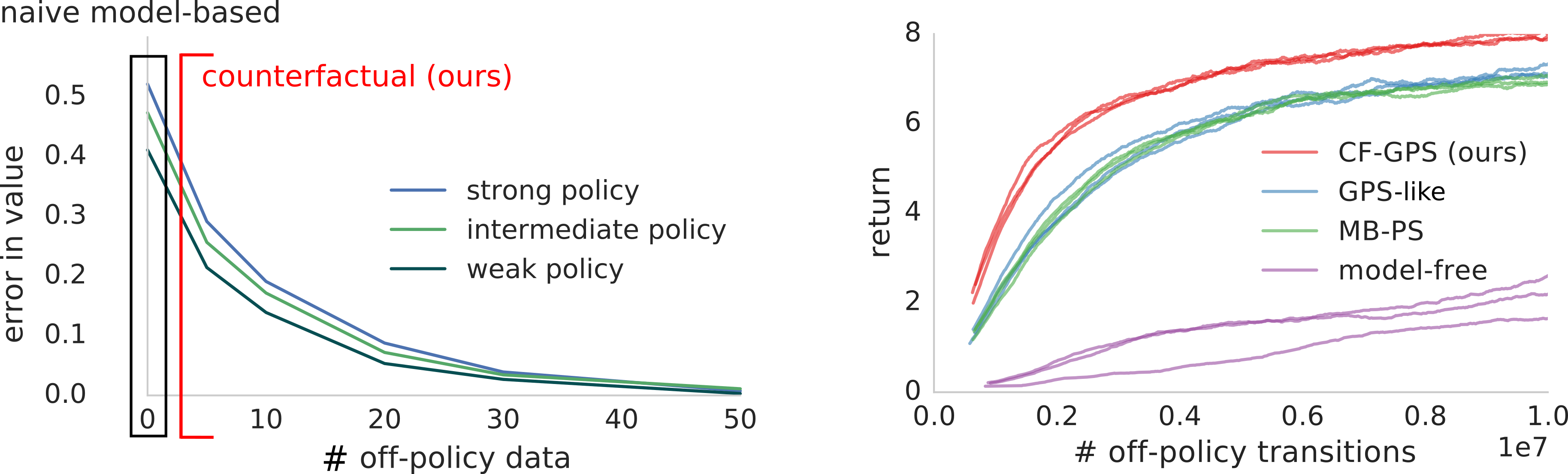}
    \caption{Experimental results on PO-SOKOBAN environment. 
    {\bf Left: Policy evaluation.} Policy evaluation error decreases with amount of off-policy data available (in \#transitions per episode) for inferring scenarios (levels) $U_{s1}$ that are used for counterfactual evaluation. 
    No data (data points on the very left) corresponds to standard model-based policy evaluation (MB-PE), yielding large errors, whereas Counterfactual policy evaluation  yields more accurate results.
    This holds for all three policies with different true performances.
    {\bf Right: Policy search.} Counterfactually-Guided Policy Search (CF-GPS) outperforms a naive model-based RL (MB-PS) algorithm as well as a version of standard Guided Policy Search (`GPS-like') on PO-SOKOBAN.
    }
    \label{fig:CF-PE_experiments}\end{figure}

\section{Off-Policy improvement: Counterfactually-guided policy search}
In the following we show how we can leverage the insights from counterfactual policy evaluation for policy search.
We commence by considering a model-based RL algorithm and discuss how we can generalize it into a counterfactual algorithm to increase its robustness to model mismatch.
We  chose a particular algorithm to start from to make a connection to the previously proposed Guided Policy Search algorithm \citep{levine2013guided, levine2014learning}, but we think a larger class of MBRL algorithms can be generalized in an analogous manner.

\subsection{Starting point: Vanilla model-based RL with return weighted regression}\label{sec:MB-PS}

We start from the following algorithm. 
We assume we have a model $\model$  of the environment with trajectory distribution $p^\pi$.
Our current policy estimate $\pi^k$ is improved at iteration $k$ using return-weighted regression:
\begin{eqnarray*}
  \pi^{k+1} &=& \argmax_{\pi} \int \exp(G(\tau))p^{\pi^k}(\tau)\log p^{\pi}(\tau)\, d\tau,
\end{eqnarray*}
where $G(\tau)$ is the return of trajectory $\tau$. 
This policy improvement step can be motivated by the framework of RL as variational inference \citep{toussaint2009robot} and is 
equivalent to minimizing the KL divergence to a trajectory distribution $\propto \exp(G)p^{\pi^k}$ which puts additional mass on high-return trajectories.
Although not strictly necessary for our exposition, we also allow for a dedicated proposal distribution over trajectories $p^\lambda(\tau)$, 
under a policy $\lambda$. 
We refer to $\lambda$ as a \emph{planner} to highlight that it could consist of a procedure that
solves episodes starting from arbitrary, full states $s_1$ sampled form the model, by repeatedly calling the model transition kernel, 
\eg a search procedure such as MCTS \citep{browne2012survey} or an expert policy.
Concretely, we optimize the following finite sample objective:
\begin{eqnarray}\label{eqn:CF-GPS_objective}
  \pi^{k+1} &=& \argmax_\pi \ \sum_{i=1}^N\exp(G^i(\tau^i))\frac{p^{\pi^k}(\tau^i)}{p^{\lambda}(\tau^i)}\log p^{\pi}(\tau^i) ,
  \ \ \ \  \ \ \ \ \tau^i \sim p^\lambda.
\end{eqnarray}
We refer to this algorithm as model-based policy search (MB-PS). It is based on model rollouts $\tau^i$ spanning entire episodes. 
An alternative would be to consider model rollouts starting from states visited in the real environment (if available).
Both versions can be augmented by counterfactual methods, but for the sake of simplicity we focus on the simpler MB-PS version detailed above
(also we did not find significant performance differences experimentally between both versions).

\subsection{Incorporating off-policy data: Counterfactually-guided policy search}

Now, we assume that the model $\model$ is an SCM.
Based on our discussion of counterfactual policy evaluation, it is straightforward to generalize the MB-PS described above by anchoring the rollouts $\tau^i$ under the model $p^\lambda$ in off-policy data $D$:
Instead of sampling $\tau^i$ directly from the prior $p^\lambda$, we draw them from counterfactual distribution $p^{\lambda\vert\hat h^i_T}$ with data $\hat h^i_T\sim D$ from the replay buffer, 
\ie instead of sampling the scenarios $U$ from the prior we infer them from the given data.
Again invoking Lemma \ref{lem:cf_unbiased}, this procedure is unbiased under no model mismatch.
We term the resulting algorithm Counterfactually-Guided Policy Search (CF-GPS), and it is summarized in Algorithm \ref{alg:everything_cf}. 
The motivation for using CF-GPS over MB-PS is analogous to the advantage of CF-PE over MB-PE discussed in \secref{sec:CF-PE}.
The policy $\pi$ in CF-GPS is optimized on rollouts $\tau^i$ that are grounded in data $\hat h^i_T$ by sampling them from the counterfactual distribution 
$p^{\lambda\vert \hat h^i_T}$ instead of the prior $p^\lambda$.
If this prior is difficult to model, we expect the counterfactual distribution to be more concentrated in regions where there is actual mass under the 
true environment $\mfp^\lambda$.

\subsection{Experiments}

We evaluate CF-GPS on the PO-SOKOBAN environment, using a modified distributed actor-learner architecture based on \citet{espeholt2018impala}:
Multiple actors (here 64) collect real data $\hat h_T$ by running the behavior policy $\mu$ in the true environment $\mathfrak{p}$. As in many distributed RL settings, $\mu$ is chosen to be a copy of the policy $\pi$, often slightly outdated, so the data must be considered to be off-policy.
The distribution $p(U_{s1}\vert \hat h_T)$ over levels $U_{s1}$ is inferred from the data $\hat h_T$ using from the model $\model$.
We sample a scenario $U_{s1}$ for each logged episode, and simulate $10$ counterfactual trajectories $\tau^{1,\ldots, 10}$
under the planner $\lambda$ for each such scenario. 
Here, for the sake of simplicity, instead of using search, the planner was assumed to be a mixture between $\pi$ and a pre-trained expert policy $\lambda_e$,
\ie $\lambda=\beta\lambda_e+(1-\beta)\pi$.
The schedule $\beta$ was set to an exponentially decaying parameter with time constant $10^5$ episodes. 
The learner performs policy improvement on $\pi$ using $\tau^{1,\ldots, 10}$ according to \eqref{eqn:CF-GPS_objective}.
$\model$ was trained online, in the same way as in \secref{sec:CF-PE_experiments}.
$\lambda$ and $\pi$ were parameterized by deep, recurrent neural networks with the same architecture described in \secref{sec:CF-PE_experiments}.

We compare CF-GPS with the vanilla MB-PS baseline described in \secref{sec:MB-PS} (based on the same number of policy updates). MB-PS differs from CF-GPS by just having access to an unconditional model $p(U_{s1}\vert \emptyset)$ over initial states.
We also consider a method which conditions the scenario model $p(U_{s1}\vert o_1)$ on the very first observation $o_1$, which is available when taking the first action and therefore does not involve hindsight reasoning.
This is more informed compared to MB-PS; however due to the noise on the observations, the state is still mostly unobserved rendering it very challenging to learn a good parametric model of the belief state $p(U_{s1}\vert o_1)$. 
We refer to this algorithm as Guided Policy Search-like (GPS-like), as 
it roughly corresponds to the algorithm presented by \citet{levine2014learning}, as discussed in greater detail in \secref{sec:related_work}.
\Figref{fig:CF-PE_experiments} shows that CF-GPS outperforms these two baselines.
As expected from the policy evaluation experiments, initial states sampled from the models for GPS and MB-PS are often
not solvable, yielding inferior training data for the policy $\pi$.
In CF-GPS, the levels are inferred from hindsight inference $p(U_{1}\vert \hat h_T)$, yielding high quality training data. 
For reference, we also show a policy trained by the model-free method of \citet{espeholt2018impala} using the same amount of environment data. 
Not surprisingly, CF-GPS is able to make better use of the data compared to the model-free baseline as it has access to the true transition and reward kernels (which were not given to the model-free method).

\section{Related Work}\label{sec:related_work}
\citet{bottou2013counterfactual} provide an in-depth discussion of applying models to off-policy evaluation. However, their and related approaches, e.g.\ \citep{li2015counterfactual, jiang2015doubly, swaminathan2015counterfactual, nedelec2017comparative, atan2016constructing}, rely on off-policy evaluation based on Importance Sampling (IS), also called Propensity Score method. 
Although these algorithms are also termed counterfactual policy evaluation, they are not counterfactual in the sense used in this paper, where noise variables are inferred from logged data and reused to evaluate counterfactual actions. 
Hence, they are dogged by high variance in the estimators common to IS, in spite of recent improvements \citep{munos2016safe}.
Recently \citep{andrychowicz2017hindsight} proposed the Hindsight Experience Replay (HER) algorithm for learning a family of goal directed policies.
In HER one observes an outcome in the true environment, which is kept fixed, and searches for the goal-directed policy that 
should have achieved this goal in order to positively reinforce it.
Therefore, this algorithm is complementary to CF-GPS where we search over alternative outcomes for a given policy.
Our CF-GPS algorithm is inspired by and extends work presented by \citet{abbeel2006using} on
a method  for de-biasing weak models by estimating additive terms in the transition kernel to better match individual, real trajectories. 
The resulting model, which is a counterfactual distribution in the terminology used in our paper, is then used for model-based policy improvement.
Our work generalizes this approach and highlights conceptual connections to causal reasoning.
Furthermore, we discuss the connection of CF-GPS to two classes of RL algorithms in greater detail below.
\paragraph{Guided Policy Search (GPS)}
CF-GPS  is closely related to GPS, in particular we focus on GPS as presented by \citet{levine2014learning}.
Consider CF-GPS in the fully-observed MDP setting where $O_t=S_t$. 
Furthermore, assume that the SCM $\model$ is structured as follows: Let $S_{t+1}=f_s(S_t, A_t, U_{st})$ be a linear function in $(S_t, A_t)$ with 
coefficients given by $U_{st}$. 
Further, assume an i.i.d. Gaussian mixture model on $U_{st}$ for all $t$.
As the states are fully observed, the inference step in the CFI procedure simplifies: we can infer the noise sources $\hat u_{st}$ (samples or MAP estimates),
\ie the unknown linear dynamics, from pairs of observed, true states $\hat s_{t}, \hat s_{t+1}$.
Furthermore assume that the reward is a quadratic function of the state. 
Then, the counterfactual distribution $p^\lambda (\tau\vert \hat u)$ is a linear quadratic regulator (LQR) with
time-varying coefficients $\hat u$.
An appropriate choice for the planner $\lambda$ is the optimal linear feedback policy for the given
LQR, which can be  computed exactly by dynamic programming. 
\begin{observation}
In the MDP setting, CF-GPS with a linear SCM and a dynamic programming planner for LQRs $\lambda$ is equivalent to GPS.
\end{observation}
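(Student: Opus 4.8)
The plan is to unfold one iteration of CF-GPS (Algorithm \ref{alg:everything_cf}) under the stated structural restrictions and identify its three stages with the three stages of GPS as presented by \citet{levine2014learning}: local dynamics fitting, local trajectory optimization by LQR, and a supervised update of the global policy from guiding samples. Since the claim is an ``equivalence'' between algorithms rather than a numerical identity, the proof is really a careful dictionary, and the work is in making each correspondence exact and in stating precisely what is being matched.

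First I would treat the \textbf{inference step of CFI as GPS's dynamics-fitting step}. Because $O_t=S_t$, the logged episode $\hat h^i_T$ contains the full state sequence $\hat s^i_1,\dots,\hat s^i_T$. In CFI we must compute the posterior $p(u\vert \hat h^i_T)$; since each $S_{t+1}=f_s(S_t,A_t,U_{st})$ is a deterministic affine map of $(S_t,A_t)$ whose coefficients are $U_{st}$, and the $U_{st}$ are a priori independent with a Gaussian-mixture law, the posterior factorizes over $t$ and each factor $p(u_{st}\vert \hat s_t,\hat a_t,\hat s_{t+1})$ is an ordinary Bayesian linear-regression posterior under a GMM prior. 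Its sample or MAP value $\hat u_{st}$ is exactly the time-varying linear-Gaussian dynamics model GPS fits around the current trajectory, and the Gaussian-mixture prior on $U_{st}$ is exactly the GMM prior over $(s_t,a_t,s_{t+1})$ tuples GPS uses to regularize that fit. So step 2 of CFI coincides with GPS's dynamics-fitting step.

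Next I would show that \textbf{the LQR planner together with the return-weighting collapses \eqref{eqn:CF-GPS_objective} into a supervised policy fit}. After conditioning on $\hat u$ and performing $I(\mu\rightarrow\pi^\lambda)$, the counterfactual model $p^{\lambda}(\tau\vert\hat u)$ is a time-varying linear-Gaussian system with quadratic reward, i.e.\ an LQR. Here I would invoke the classical control-as-inference identity for LQR (in the spirit of \citet{toussaint2009robot}): for a linear-Gaussian base policy $\pi^k$, the reward-tilted trajectory distribution $\propto\exp(G(\tau))\,p^{\pi^k}(\tau\vert\hat u)$ is again linear-Gaussian and equals the closed-loop distribution of the optimal time-varying linear feedback controller for that LQR, obtainable exactly by the Riccati backward recursion (dynamic programming). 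Choosing the planner $\lambda$ to be precisely this optimal controller makes $p^{\lambda}(\tau\vert\hat u)\propto\exp(G(\tau))\,p^{\pi^k}(\tau\vert\hat u)$, so in \eqref{eqn:CF-GPS_objective} the factor $\exp(G^i)\,p^{\pi^k}(\tau^i)/p^{\lambda}(\tau^i)$ is constant in $\tau^i$ up to the normalizer, and the update reduces to $\argmax_\pi\sum_i\log p^{\pi}(\tau^i)$ with $\tau^i\sim p^{\lambda}(\cdot\vert\hat u^i)$ --- maximum-likelihood regression of the global policy $\pi$ onto rollouts of the per-episode LQR controllers, which is the GPS global-policy step.

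Assembling the three correspondences shows that one CF-GPS iteration in this setting performs (local dynamics fit $\to$ local LQR optimization $\to$ supervised global-policy fit), i.e.\ one GPS iteration. I would close by being explicit about the scope of ``equivalent'': GPS fits the local dynamics from several sampled trajectories and imposes an adaptive KL trust region between successive local controllers, whereas CF-GPS as stated uses a single inferred $\hat u$ per logged episode and a fixed temperature (the tilt $\exp(G)$) rather than an adaptive KL bound --- these are choices of the respective outer loops, not of the core update. The main obstacle is exactly this last point: pinning down which version of GPS one is matching, and verifying that the control-as-inference reduction of \eqref{eqn:CF-GPS_objective} to plain supervised regression genuinely goes through under the quadratic-reward / linear-Gaussian assumptions (in particular that the tilted distribution normalizes and remains in the linear-Gaussian family), so that the equivalence holds with no hidden approximation beyond those stated.
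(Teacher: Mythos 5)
Your proposal is correct and follows essentially the same route as the paper, which supports this Observation only by the informal dictionary in the preceding paragraph: full observability makes the CFI posterior over the $U_{st}$ a per-transition linear-regression fit under the GMM prior (GPS's local dynamics fitting), the quadratic reward turns the counterfactual model into a time-varying LQR, the planner is its DP-computed linear feedback controller, and the policy-improvement step becomes supervised regression of the global policy onto the guiding rollouts. You go one useful step further than the paper by making explicit, via the control-as-inference identity, why the importance weights $\exp(G)\,p^{\pi^k}/p^{\lambda}$ in \eqref{eqn:CF-GPS_objective} collapse to a constant when $\lambda$ is the (maximum-entropy) optimal LQR controller, and by flagging the residual mismatches (single inferred $\hat u$ versus refit dynamics, fixed tilt versus adaptive KL trust region) that the paper leaves implicit.
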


Another perspective is that GPS is the counterfactual version of the MB-PS procedure from \secref{sec:MB-PS}:
\begin{observation}
In the MDP setting with a linear SCM and a dynamic programming planner for LQRs $\lambda$, GPS is the 
counterfactual variant of the MB-PS procedure outlined above.
\end{observation}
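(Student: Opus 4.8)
The plan is to reduce this second observation to the first one together with the very definition of CF-GPS. Recall from Section~\ref{sec:MB-PS} that MB-PS is fully specified by a choice of SCM $\model$, a planner $\lambda$, and the return-weighted regression update~\eqref{eqn:CF-GPS_objective} applied to trajectories $\tau^i\sim p^\lambda$ that are sampled with the scenarios $U$ drawn \emph{from the prior} $P_U$. The CF-GPS construction is nothing but MB-PS with a single modification: the $\tau^i$ are instead drawn from the counterfactual distribution $p^{\lambda\vert\hat h^i_T}$, \ie the scenarios $U$ are inferred from off-policy data $\hat h^i_T\sim D$ rather than sampled from the prior, while the planner $\lambda$, the weighting $\exp(G^i)$, the importance ratio $p^{\pi^k}/p^\lambda$, and the objective~\eqref{eqn:CF-GPS_objective} are left untouched. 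Hence, for \emph{any} fixed $\model$ and $\lambda$, CF-GPS is by definition the counterfactual variant of the corresponding MB-PS instance; this is the observation I want to invoke, specialized to one particular $(\model,\lambda)$.

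First I would instantiate MB-PS with exactly the SCM and planner appearing in the first observation: $\model$ has fully-observed states ($O_t=S_t$), transitions $S_{t+1}=f_s(S_t,A_t,U_{st})$ that are linear in $(S_t,A_t)$ with the time-varying coefficients packed into $U_{st}$, an i.i.d.\ Gaussian-mixture prior on each $U_{st}$, and a quadratic reward; and $\lambda$ is the dynamic-programming planner that returns the optimal linear feedback policy for the LQR determined by a realization $\hat u$ of those coefficients. I would check this is a bona fide MB-PS instance so that Section~\ref{sec:MB-PS} applies verbatim: the prior over $U$ is a proper distribution, $p^\lambda$ is well-defined, and the regression target $\exp(G(\tau))\,p^{\pi^k}(\tau)$ in~\eqref{eqn:CF-GPS_objective} is integrable because $G$ is quadratic and the trajectory density under a linear-Gaussian SCM is a Gaussian mixture.

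Next I would simply apply the first observation: in precisely this MDP / linear-SCM / LQR-planner setting, CF-GPS coincides with GPS in the sense of \citet{levine2014learning}. Chaining the two facts then closes the argument: GPS $=$ CF-GPS in this setting (first observation), and CF-GPS is the counterfactual variant of this MB-PS instance (by construction), so GPS \emph{is} the counterfactual variant of the MB-PS procedure of Section~\ref{sec:MB-PS}.

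The hard part is not logical but bookkeeping: essentially all the content lives in the first observation, so the real work is making the GPS$\leftrightarrow$CF-GPS dictionary precise enough that ``counterfactual variant'' is unambiguous. Concretely I would need to verify that (i) the CFI inference step --- sampling or MAP-estimating $\hat u_{st}$ from the observed consecutive state pairs $(\hat s_t,\hat s_{t+1})$ --- is exactly GPS's fitting of time-varying local linear-Gaussian dynamics to real trajectory data; (ii) running the planner $\lambda$ on the inferred $\hat u$ reproduces GPS's trajectory-optimization / local-controller step; and (iii) the policy-improvement step~\eqref{eqn:CF-GPS_objective}, specialized to quadratic $G$ and linear-Gaussian $p^{\pi^k},p^\lambda$, reduces to GPS's supervised regression of the global policy onto the local controllers, with the $\exp(G)$-weighting matched to GPS's objective via the RL-as-inference identity already used to motivate MB-PS. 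These three correspondences are where a notational mismatch between the two algorithm families could hide a genuine gap, so I would pin them down carefully; once they are, the observation is immediate.
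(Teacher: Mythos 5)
Your proposal is correct and follows essentially the same route as the paper: the paper likewise treats this observation as an immediate consequence of the preceding one (CF-GPS $=$ GPS under the linear SCM / LQR-planner instantiation) combined with the definitional fact that CF-GPS is obtained from MB-PS by replacing prior sampling of the scenarios $U$ with counterfactual inference from logged data. Your additional bookkeeping on the three GPS$\leftrightarrow$CF-GPS correspondences is more explicit than the paper's informal discussion, but it does not change the argument.
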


The fact that GPS is a successful algorithm in practice
shows that the `grounding' of model-based search / rollouts in real, off-policy data afforded by counterfactual reasoning 
massively improves the naive, `prior sample'-based MB-PS algorithm.
These considerations also suggest when we expect CF-GPS to be superior compared to regular GPS: 
If the uncertainty in the environment transition $U_{st}$ cannot be reliably identified from subsequent 
pairs of observations $\hat o_t, \hat o_{t+1}$ alone, we expect benefits of inferring $U_{st}$ from a larger  
context of observations, in the extreme case from the entire history $\hat h_T$ as described above.

\paragraph{Stochastic Value Gradient methods}

There are multiple interesting connections of CF-GPS to Stochastic Value Gradient (SVG) methods \citep{heess2015learning}.
In SVG, a policy $\pi$ for a MDP is learned by gradient ascent on the expected return under a model $p$. Instead of using the score-function 
estimator, SVG relies on a reparameterization of the stochastic model and policy \citep{kingma2013auto, rezende2014stochastic}.
We note that this reparameterization casts $p$ into an SCM. 
As in GPS, the noise sources $U_{st}$ are inferred from two subsequent observed states $\hat s_{t}, \hat s_{t+1}$ from the true environment,
and the action noise $U_{at}$ is kept frozen.
As pointed out in the GPS discussion, this procedure corresponds to the inference step in a counterfactual query. 
Given inferred values $u$ for $U$, gradients $\partial_\theta G$  of the return under the model are taken with respect to the policy parameters $\theta$.  
We can loosely interpret these gradients as $2\,\mathrm{dim}(\theta)$ counterfactual policy evaluations 
of policies $\pi(\theta\pm\Delta\theta_i)$ where a single dimension $i$ of the parameter vector $\theta$ is perturbed.

\section{Discussion}
Simulating plausible synthetic experience de novo is a hard problem for many environments, often resulting in biases for model-based RL algorithms.
The main takeaway from this work is that we can improve policy learning by evaluating counterfactual actions in concrete, past scenarios.
Compared to only considering synthetic scenarios, this procedure mitigates model bias.
However, it relies on some crucial assumptions that we want to briefly discuss here.
The first assumption is that off-policy experience is available at all. In cases where this is \eg too costly to acquire, we cannot use any of the proposed methods and have to exclusively rely on the simulator / model.
We also assumed that there are no additional hidden confounders in the environment and that  the main challenge in modelling the environment is capturing the distribution of the noise sources $p(U)$, whereas we assumed that the transition and reward kernels given the noise is easy to model.
This seems a reasonable assumption in some environments, such as the partially observed grid-world considered here, but not all. 
Probably the most restrictive assumption is that we require the inference over the noise $U$ given data $\hat h_T$ to be sufficiently accurate. 
We showed in our example, that we could learn a parametric model of this distribution from privileged information, \ie from joint samples $u, h_T$ from the true environment.
However, imperfect inference over the scenario $U$ could result \eg in wrongly attributing a negative outcome to the agent's actions, instead environment factors. 
This could in turn result in too optimistic predictions for counterfactual actions.
Future research is needed to investigate if learning a sufficiently strong SCM is possible without privileged information for interesting RL domains.
If, however, we can trust the transition and reward kernels of the model, we can substantially improve model-based RL methods by counterfactual reasoning on off-policy data, as demonstrated in our experiments and by the success of Guided Policy Search and Stochastic Value Gradient methods.


\bibliographystyle{iclr2019_conference}
\bibliography{references}

\begin{thebibliography}{32}
\providecommand{\natexlab}[1]{#1}
\providecommand{\url}[1]{\texttt{#1}}
\expandafter\ifx\csname urlstyle\endcsname\relax
  \providecommand{\doi}[1]{doi: #1}\else
  \providecommand{\doi}{doi: \begingroup \urlstyle{rm}\Url}\fi

\bibitem[Abbeel et~al.(2006)Abbeel, Quigley, and Ng]{abbeel2006using}
Pieter Abbeel, Morgan Quigley, and Andrew~Y Ng.
\newblock {Using inaccurate models in reinforcement learning}.
\newblock In \emph{Proceedings of the 23rd international conference on Machine
  learning}, pp.\  1--8. ACM, 2006.

\bibitem[Andrychowicz et~al.(2017)Andrychowicz, Wolski, Ray, Schneider, Fong,
  Welinder, McGrew, Tobin, Abbeel, and Zaremba]{andrychowicz2017hindsight}
Marcin Andrychowicz, Filip Wolski, Alex Ray, Jonas Schneider, Rachel Fong,
  Peter Welinder, Bob McGrew, Josh Tobin, OpenAI~Pieter Abbeel, and Wojciech
  Zaremba.
\newblock {Hindsight experience replay}.
\newblock In \emph{Advances in Neural Information Processing Systems}, pp.\
  5048--5058, 2017.

\bibitem[Atan et~al.(2016)Atan, Zame, Feng, and van~der
  Schaar]{atan2016constructing}
Onur Atan, William~R Zame, Qiaojun Feng, and Mihaela van~der Schaar.
\newblock {Constructing Effective Personalized Policies Using Counterfactual
  Inference from Biased Data Sets with Many Features}.
\newblock \emph{arXiv preprint arXiv:1612.08082}, 2016.

\bibitem[Balke \& Pearl(1994)Balke and Pearl]{balke1994counterfactual}
Alexander Balke and Judea Pearl.
\newblock {Counterfactual probabilities: Computational methods, bounds and
  applications}.
\newblock In \emph{Proceedings of the Tenth international conference on
  Uncertainty in artificial intelligence}, pp.\  46--54. Morgan Kaufmann
  Publishers Inc., 1994.

\bibitem[Bottou et~al.(2013)Bottou, Peters, Qui{\~n}onero-Candela, Charles,
  Chickering, Portugaly, Ray, Simard, and Snelson]{bottou2013counterfactual}
L{\'e}on Bottou, Jonas Peters, Joaquin Qui{\~n}onero-Candela, Denis~X Charles,
  D~Max Chickering, Elon Portugaly, Dipankar Ray, Patrice Simard, and
  Ed~Snelson.
\newblock {Counterfactual reasoning and learning systems: The example of
  computational advertising}.
\newblock \emph{The Journal of Machine Learning Research}, 14\penalty0
  (1):\penalty0 3207--3260, 2013.

\bibitem[Browne et~al.(2012)Browne, Powley, Whitehouse, Lucas, Cowling,
  Rohlfshagen, Tavener, Perez, Samothrakis, and Colton]{browne2012survey}
Cameron~B Browne, Edward Powley, Daniel Whitehouse, Simon~M Lucas, Peter~I
  Cowling, Philipp Rohlfshagen, Stephen Tavener, Diego Perez, Spyridon
  Samothrakis, and Simon Colton.
\newblock {A survey of Monte Carlo tree search methods}.
\newblock \emph{IEEE Transactions on Computational Intelligence and AI in
  games}, 4\penalty0 (1):\penalty0 1--43, 2012.

\bibitem[Espeholt et~al.(2018)Espeholt, Soyer, Munos, Simonyan, Mnih, Ward,
  Doron, Firoiu, Harley, Dunning, et~al.]{espeholt2018impala}
Lasse Espeholt, Hubert Soyer, Remi Munos, Karen Simonyan, Volodymir Mnih, Tom
  Ward, Yotam Doron, Vlad Firoiu, Tim Harley, Iain Dunning, et~al.
\newblock {IMPALA: Scalable distributed Deep-RL with importance weighted
  actor-learner architectures}.
\newblock \emph{arXiv preprint arXiv:1802.01561}, 2018.

\bibitem[Gregor et~al.(2015)Gregor, Danihelka, Graves, Rezende, and
  Wierstra]{gregor2015draw}
Karol Gregor, Ivo Danihelka, Alex Graves, Danilo~Jimenez Rezende, and Daan
  Wierstra.
\newblock {Draw: A recurrent neural network for image generation}.
\newblock \emph{arXiv preprint arXiv:1502.04623}, 2015.

\bibitem[Gregor et~al.(2016)Gregor, Besse, Rezende, Danihelka, and
  Wierstra]{gregor2016towards}
Karol Gregor, Frederic Besse, Danilo~Jimenez Rezende, Ivo Danihelka, and Daan
  Wierstra.
\newblock {Towards conceptual compression}.
\newblock In \emph{Advances In Neural Information Processing Systems}, pp.\
  3549--3557, 2016.

\bibitem[Heess et~al.(2015)Heess, Wayne, Silver, Lillicrap, Erez, and
  Tassa]{heess2015learning}
Nicolas Heess, Gregory Wayne, David Silver, Tim Lillicrap, Tom Erez, and Yuval
  Tassa.
\newblock {Learning continuous control policies by stochastic value gradients}.
\newblock In \emph{Advances in Neural Information Processing Systems}, pp.\
  2944--2952, 2015.

\bibitem[Jiang \& Li(2015)Jiang and Li]{jiang2015doubly}
Nan Jiang and Lihong Li.
\newblock {Doubly robust off-policy value evaluation for reinforcement
  learning}.
\newblock \emph{arXiv preprint arXiv:1511.03722}, 2015.

\bibitem[Jiang et~al.(2015)Jiang, Kulesza, Singh, and
  Lewis]{jiang2015dependence}
Nan Jiang, Alex Kulesza, Satinder Singh, and Richard Lewis.
\newblock {The dependence of effective planning horizon on model accuracy}.
\newblock In \emph{Proceedings of the 2015 International Conference on
  Autonomous Agents and Multiagent Systems}, pp.\  1181--1189. International
  Foundation for Autonomous Agents and Multiagent Systems, 2015.

\bibitem[Kingma \& Ba(2014)Kingma and Ba]{kingma2014adam}
Diederik~P Kingma and Jimmy Ba.
\newblock {Adam: A method for stochastic optimization}.
\newblock \emph{arXiv preprint arXiv:1412.6980}, 2014.

\bibitem[Kingma \& Welling(2013)Kingma and Welling]{kingma2013auto}
Diederik~P Kingma and Max Welling.
\newblock {Auto-encoding variational Bayes}.
\newblock \emph{arXiv preprint arXiv:1312.6114}, 2013.

\bibitem[Korb et~al.(2004)Korb, Hope, Nicholson, and Axnick]{korb2004varieties}
Kevin~B Korb, Lucas~R Hope, Ann~E Nicholson, and Karl Axnick.
\newblock {Varieties of causal intervention}.
\newblock In \emph{Pacific Rim International Conference on Artificial
  Intelligence}, pp.\  322--331. Springer, 2004.

\bibitem[Levine \& Abbeel(2014)Levine and Abbeel]{levine2014learning}
Sergey Levine and Pieter Abbeel.
\newblock {Learning neural network policies with guided policy search under
  unknown dynamics}.
\newblock In \emph{Advances in Neural Information Processing Systems}, pp.\
  1071--1079, 2014.

\bibitem[Levine \& Koltun(2013)Levine and Koltun]{levine2013guided}
Sergey Levine and Vladlen Koltun.
\newblock {Guided policy search}.
\newblock In \emph{International Conference on Machine Learning}, pp.\  1--9,
  2013.

\bibitem[Li et~al.(2015)Li, Chen, Kleban, and Gupta]{li2015counterfactual}
Lihong Li, Shunbao Chen, Jim Kleban, and Ankur Gupta.
\newblock {Counterfactual estimation and optimization of click metrics in
  search engines: A case study}.
\newblock In \emph{Proceedings of the 24th International Conference on World
  Wide Web}, pp.\  929--934. ACM, 2015.

\bibitem[Munos et~al.(2016)Munos, Stepleton, Harutyunyan, and
  Bellemare]{munos2016safe}
R{\'e}mi Munos, Tom Stepleton, Anna Harutyunyan, and Marc Bellemare.
\newblock {Safe and efficient off-policy reinforcement learning}.
\newblock In \emph{Advances in Neural Information Processing Systems}, pp.\
  1054--1062, 2016.

\bibitem[Nedelec et~al.(2017)Nedelec, Roux, and
  Perchet]{nedelec2017comparative}
Thomas Nedelec, Nicolas~Le Roux, and Vianney Perchet.
\newblock {A comparative study of counterfactual estimators}.
\newblock \emph{arXiv preprint arXiv:1704.00773}, 2017.

\bibitem[Pearl(2009)]{Pearl:2009:CMR:1642718}
Judea Pearl.
\newblock \emph{{Causality: Models, Reasoning and Inference}}.
\newblock Cambridge University Press, New York, NY, USA, 2nd edition, 2009.
\newblock ISBN 052189560X, 9780521895606.

\bibitem[Pearl \& Mackenzie(2018)Pearl and Mackenzie]{pearl2018book}
Judea Pearl and Dana Mackenzie.
\newblock \emph{{The Book of Why: The New Science of Cause and Effect}}.
\newblock Basic Books, 2018.

\bibitem[Peters et~al.(2017)Peters, Janzing, and Sch{\"o}lkopf]{PetJanSch17}
J.~Peters, D.~Janzing, and B.~Sch{\"o}lkopf.
\newblock \emph{{Elements of Causal Inference - Foundations and Learning
  Algorithms}}.
\newblock Adaptive Computation and Machine Learning Series. The MIT Press,
  Cambridge, MA, USA, 2017.

\bibitem[Precup(2000)]{precup2000eligibility}
Doina Precup.
\newblock {Eligibility traces for off-policy policy evaluation}.
\newblock \emph{Computer Science Department Faculty Publication Series}, pp.\
  ~80, 2000.

\bibitem[Racani{\`e}re et~al.(2017)Racani{\`e}re, Weber, Reichert, Buesing,
  Guez, Rezende, Badia, Vinyals, Heess, Li, et~al.]{racaniere2017imagination}
S{\'e}bastien Racani{\`e}re, Th{\'e}ophane Weber, David Reichert, Lars Buesing,
  Arthur Guez, Danilo~Jimenez Rezende, Adri{\`a}~Puigdom{\`e}nech Badia, Oriol
  Vinyals, Nicolas Heess, Yujia Li, et~al.
\newblock {Imagination-augmented agents for deep reinforcement learning}.
\newblock In \emph{Advances in Neural Information Processing Systems}, pp.\
  5690--5701, 2017.

\bibitem[Rezende et~al.(2014)Rezende, Mohamed, and
  Wierstra]{rezende2014stochastic}
Danilo~Jimenez Rezende, Shakir Mohamed, and Daan Wierstra.
\newblock {Stochastic backpropagation and approximate inference in deep
  generative models}.
\newblock \emph{arXiv preprint arXiv:1401.4082}, 2014.

\bibitem[Roese(1997)]{roese1997counterfactual}
Neal~J Roese.
\newblock {Counterfactual thinking}.
\newblock \emph{Psychological bulletin}, 121\penalty0 (1):\penalty0 133, 1997.

\bibitem[Swaminathan \& Joachims(2015)Swaminathan and
  Joachims]{swaminathan2015counterfactual}
Adith Swaminathan and Thorsten Joachims.
\newblock {Counterfactual risk minimization: Learning from logged bandit
  feedback}.
\newblock In \emph{International Conference on Machine Learning}, pp.\
  814--823, 2015.

\bibitem[Talvitie(2014)]{talvitie2014model}
Erik Talvitie.
\newblock {Model Regularization for Stable Sample Rollouts}.
\newblock In \emph{UAI}, pp.\  780--789, 2014.

\bibitem[Toussaint(2009)]{toussaint2009robot}
Marc Toussaint.
\newblock Robot trajectory optimization using approximate inference.
\newblock In \emph{Proceedings of the 26th annual international conference on
  machine learning}, pp.\  1049--1056. ACM, 2009.

\bibitem[Wright(1920)]{wright1920relative}
Sewall Wright.
\newblock The relative importance of heredity and environment in determining
  the piebald pattern of guinea-pigs.
\newblock \emph{Proceedings of the National Academy of Sciences}, 6\penalty0
  (6):\penalty0 320--332, 1920.

\bibitem[Xingjian et~al.(2015)Xingjian, Chen, Wang, Yeung, Wong, and
  Woo]{xingjian2015convolutional}
Shi Xingjian, Zhourong Chen, Hao Wang, Dit-Yan Yeung, Wai-Kin Wong, and
  Wang-chun Woo.
\newblock {Convolutional LSTM network: A machine learning approach for
  precipitation nowcasting}.
\newblock In \emph{Advances in neural information processing systems}, pp.\
  802--810, 2015.

\end{thebibliography}

\clearpage
\appendix

\section{Proofs}\label{sec:app_proofs}
\subsection{Proof of Lemma \ref{lem:cf_unbiased}}
\begin{proof}
We start from the fact that the density over noise sources $U$ remains the same for every intervention $I$ as $U$ are root nodes in $\mathcal G$:
\begin{eqnarray*}
p^{\doo(I)}(u)&=&p(u).
\end{eqnarray*}
This leads to:
\begin{eqnarray*}
p^{\doo(I)}(x) &=& \int p^{\doo(I)}(x\vert u)p^{\doo(I)}(u)\, du\\
             &=& \int p^{\doo(I)}(x\vert u)p(u) \, du\\
             &=& \int p^{\doo(I)}(x\vert u)\left(\int p(\hat x_o, u) d\hat x_o\right) \, du\\
             &=& \int\int p^{\doo(I)}(x\vert u) \, p(u|\hat x_o) p(\hat x_o) \,du\, d\hat x_o\\
             &=& \mathbb E_{\hat x_o\sim p}[\int  p^{\doo(I)}(x\vert u)p(u\vert \hat x_o)\, du]\\
             &=& \mathbb E_{\hat x_o\sim p}[p^{\doo(I)\vert \hat x_0}(x)].
\end{eqnarray*}
\end{proof}

\subsection{Proof of Corollary \ref{cor:cf_prior_mix}}
\begin{proof}

Given two sets $\mathrm{CF}\subset\{1, \ldots, N\}$ and $\mathrm{Prior}\subset\{1, \ldots, N\}$ with $\mathrm{CF}\cap\mathrm{Prior}=\emptyset$ and $\mathrm{CF}\cup\mathrm{Prior}=\{ 1,\ldots, N \}$, we define $u_{\mathrm{CF}}=(u_n)_{n\in\mathrm{CF}}$ and $u_{\mathrm{Prior}}=(u_n)_{n\in\mathrm{Prior}}$.
By construction, the scenarios $U$ are independent under the prior, \ie $p(u)=\prod_{n=1}^N p(u_n)$. 
Therefore $u_{\mathrm{CF}}$ and $u_{\mathrm{Prior}}$ are independent.
We can write:
\begin{eqnarray*}
p(u)    &=&     \left(\prod_{n\in\mathrm{CF}}p(u_n) \right)\left(\prod_{n\in\mathrm{Prior}}p(u_n) \right)\\
        &=& p(u_{\mathrm{CF}})p(u_{\mathrm{Prior}}).
\end{eqnarray*}
Following the arguments from Lemma \ref{lem:cf_unbiased}, the averaged inference distribution is equal to the prior $p(u)=\mathbb E_{\hat x_o\sim p}[ p(u\vert \hat x_o)]$. This also holds for any subset of the variables $u$, in particular for for $p(u_{\mathrm{CF}})$. Hence:
\begin{eqnarray*}
p(u)    &=& p(u_{\mathrm{CF}})p(u_{\mathrm{Prior}})\\
        &=& \mathbb E_{\hat x_o\sim p}[ p(u_{\mathrm{CF}}\vert \hat x_o)] p(u_{\mathrm{Prior}}).
\end{eqnarray*}
\end{proof}

\section{Details on casting a POMDP into SCM form}
\begin{lemma}[Auto-regressive uniformization aka Reparametrization]\label{lem:auto-reg}
Consider random variables $X_1,\ldots, X_N$ with joint distribution $P$. 
There exist functions $f_n$ for $n\in\{1, \ldots, N \}$ 
such that with independent random variables $U_n\sim\mathrm{Uniform}[0,1]$ the random variables $X'$ equal $X$ in distribution, \ie $X'\stackrel{d}{=}X$,
where $X'=(X'_1,\ldots, X'_n)$ are defined as:
\begin{eqnarray*}
    X'_n    &:=&    f_{n}(U_n, X'_{<n}).
\end{eqnarray*}
\end{lemma}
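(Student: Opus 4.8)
The plan is to construct the functions $f_n$ explicitly using the classical inverse-CDF (probability integral transform) trick, applied sequentially to the conditional distributions of the chain. First I would factor the joint law of $X=(X_1,\ldots,X_N)$ into conditionals via the chain rule: $P(X_1,\ldots,X_N) = \prod_{n=1}^N P_{X_n\mid X_{<n}}$. For each $n$ and each value $x_{<n}$ of the preceding variables, let $F_{n}(\,\cdot\mid x_{<n})$ denote the conditional CDF of $X_n$ given $X_{<n}=x_{<n}$, and define the generalized (quantile) inverse $F_n^{-1}(u\mid x_{<n}) := \inf\{x : F_n(x\mid x_{<n}) \ge u\}$ for $u\in(0,1)$. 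Then set $f_n(u_n, x_{<n}) := F_n^{-1}(u_n\mid x_{<n})$.

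The key steps are then: (1) show that for a single real-valued random variable $Y$ with CDF $F$ and $U\sim\mathrm{Uniform}[0,1]$, the variable $F^{-1}(U)$ has the same distribution as $Y$ — this is the standard probability integral transform and holds for arbitrary (not necessarily continuous) CDFs because $F^{-1}(u)\le y \iff u \le F(y)$, so $\Pr[F^{-1}(U)\le y] = \Pr[U\le F(y)] = F(y)$. (2) Apply this conditionally: since $U_n$ is independent of $U_{<n}$ and hence (by the inductive construction) of $X'_{<n}$, conditioning on $X'_{<n}=x_{<n}$ leaves $U_n$ uniform, so $X'_n = f_n(U_n, x_{<n})$ has conditional CDF $F_n(\cdot\mid x_{<n})$. (3) Conclude by induction on $n$ that $(X'_1,\ldots,X'_n) \stackrel{d}{=} (X_1,\ldots,X_n)$: the base case is step (1) with $F_1$, and the inductive step combines the induction hypothesis for $X'_{<n}$ with the conditional statement from step (2), since matching all conditionals $P_{X'_n\mid X'_{<n}}$ together with the marginal of $X'_{<n}$ pins down the joint law. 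Taking $n=N$ gives $X'\stackrel{d}{=}X$.

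I would also remark that the construction immediately yields the SCM/DAG structure used in the body: each $X'_n$ depends only on $U_n$ and $X'_{<n}$, the $U_n$ are independent root nodes, and the graph is a DAG (in fact a complete DAG on the ordering $1<\cdots<N$, which can be sparsified to the actual dependency structure whenever a conditional $P_{X_n\mid X_{<n}}$ factors through a subset of $X_{<n}$). For the POMDP application one takes the ordering to interleave $S_t, O_t, A_t$ and reads off $f_{st}, f_{ot}, f_\pi$ as the corresponding conditional quantile functions.

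The main obstacle is purely a matter of care rather than depth: handling CDFs that are not continuous and not strictly increasing, so that one must use the generalized inverse and verify the equivalence $F^{-1}(u)\le y \iff u\le F(y)$ holds in that generality; and, if one wants full rigor in the conditional step, ensuring that regular conditional distributions exist and that the quantile functions $F_n^{-1}(u\mid x_{<n})$ can be chosen jointly measurable in $(u, x_{<n})$. For standard Borel spaces (which covers the discrete and Euclidean variables in the paper) these are routine measure-theoretic facts, so I would invoke them rather than prove them from scratch. If one is willing to restrict attention to the discrete or finitely-supported case relevant to the experiments, even these technicalities disappear and the argument is elementary.
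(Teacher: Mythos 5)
Your proof is correct and takes essentially the same route as the paper's: factor the joint law into conditionals, define $f_n$ as the conditional inverse-CDF $F^{-1}_{n,x_{<n}}(u_n)$, and induct on $n$ using the probability integral transform together with the independence of $U_n$ from $X'_{<n}$. Your additional care with generalized inverses and measurability of the conditional quantile maps goes beyond the paper's terse argument but does not change the approach.
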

\begin{proof}
We construct the functions $f_n$ by induction on $n$.
Consider the conditional distribution $P_{X_n\vert X_{<n}}$.
For fixed $X_{<n}$, denote its CDF with $F_{n, X_{<n}}$. 
We construct a random variable $X':=F^{-1}_{n,X_{<n}}(U_n)$ with $U_n\sim\mathrm{Uniform}[0,1]$ independent from $X_{<n}$ and $U_{<n}$.
By virtue of the inverse-CDF method, we have $X'\vert X_{<n}\stackrel{d}{=}X\vert X_{<n}$.
Therefore, $f_n(U_n,X_{<n}):=F^{-1}_{n,X_{<n}}(U_n)$ satisfies the above lemma.
\end{proof}

\section{Model architecture}\label{sec:model_policy_details}

\begin{figure}
\centering
\includegraphics[width=\textwidth]{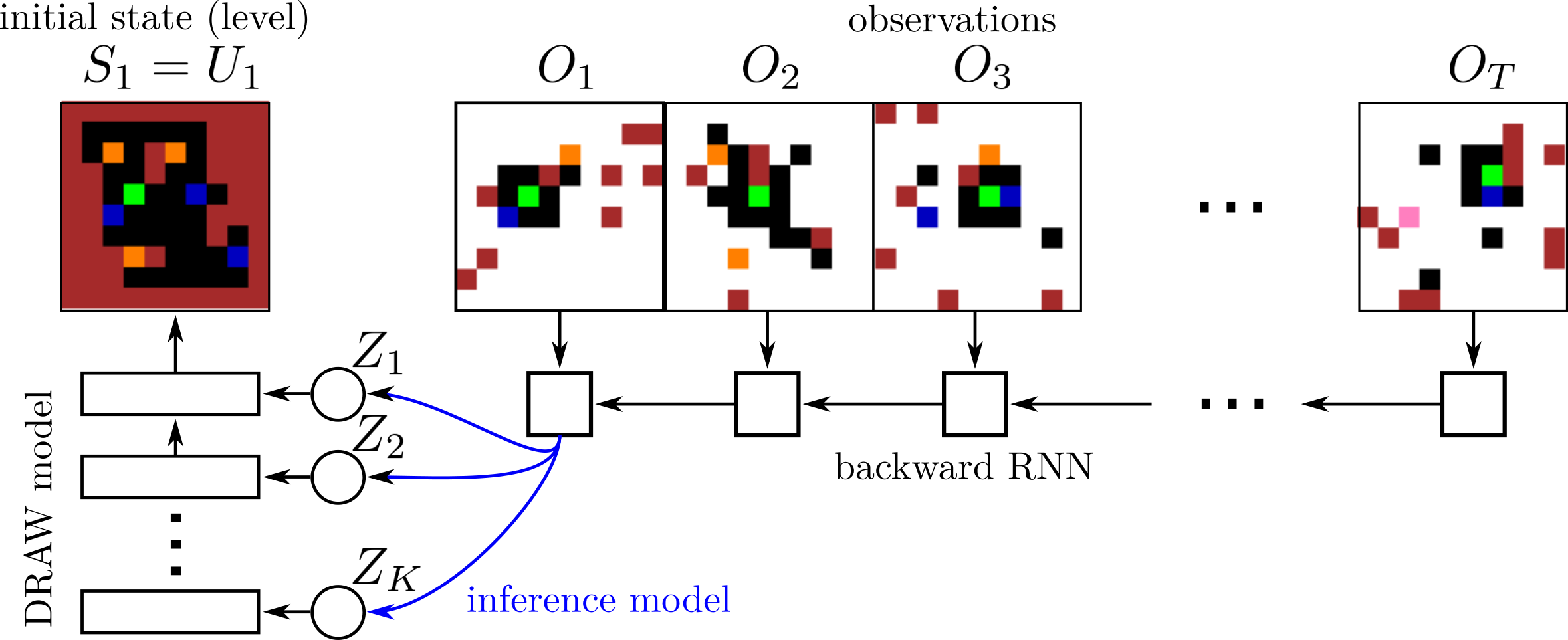}
\caption{
    {\bf Top: PO-SOKOBAN}. Shown on the left is a procedurally generated initial state. 
    The agent is shown in green, boxes in yellow, targets in blue and walls in red.
    The agent does not observe this state but a sequence of observations, which are masked by iid noise with 0.9 probability, except a 3x3 window around the agent.
    {\bf Bottom: Inference model.} For counterfactual inference in PO-SOKOBAN, we need the (approximate) inference distribution $p(U_{s1}\vert \hat h_T)$ over the initial state $U_{s1}=S_1$, conditioned on the history of observations $\hat h_T$.
    We model this distribution using a DRAW generative model with latent variables $Z$, which are conditioned on the output of a backward RNN 
    summarizing the observation history.
    }
\label{fig:partoban}
\end{figure}
 
We assume that we are given the true transition and reward kernels. 
As the transitions are deterministic in PO-SOKOBAN, the only part of the model that remains to be identified is the 
initial state distribution $p(U_{s1})$. 
We learned this model from data using a the DRAW model \citep{gregor2015draw}, which is a parametric, multi-layer, latent variable, 
neural network model for distributions.
For our purposes we chose the convolutional DRAW architecture proposed by \citep{gregor2016towards}.
First, the observation data is summarized by a convolutional LSTM with 32 hidden units and kernel size of 3.
The resulting final LSTM state is fed into a conditional Gaussian prior over the latent variables $Z_{k=1,\ldots,8}$ of the 
8-layer conv-DRAW model. Each layer has 32 hidden layers and the canvas had 7 layers, corresponding to the 7 channels of the 
categorical $U_{s1}\in \lbrace 0,1 \rbrace^{10\times 10\times 7}$ that we wish to model.
The model (together with the backward RNN) was trained with the ADAM optimizer \citep{kingma2014adam} on the ELBO loss using the 
reparametrization trick \citep{kingma2013auto, rezende2014stochastic}. The mini-batch size was set to 4 and the learning rate to $3e-4$. 
We want to emphasize that the DRAW latent variables $Z$ are not directly the noise variables $U$ of the SCM, but integrating out these variables yields this distribution $p(U_{s1}\vert \hat h_T)=\int p(U_{s1}\vert z, \hat h_T)p(z\vert \hat h_T)d\hat h_T$.

\section{Model mismatch analysis}\label{sec:posterior_analysis}

\begin{figure}[t]
    \centering
    \begin{tabular}{ccc}
        \includegraphics[width=0.25\textwidth]{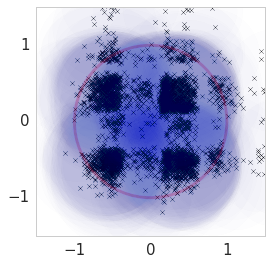}&
        \includegraphics[width=0.25\textwidth]{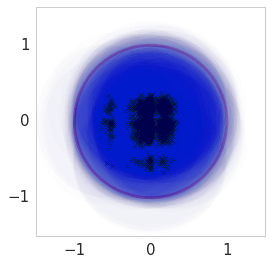}&
        \includegraphics[width=0.25\textwidth]{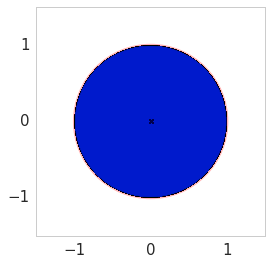} \\
        unconditional model, $t=0$ &
        filtering model, $t=1$  &
        smoothing model, $t=50$        
    \end{tabular}
    \caption{
    Analysis of the model mismatch of the learned inference distributions $p(U_{s1}\vert\hat h_t)$ over the initial PO-SOKOBAN state $U_{s1}$, for three different 
    amounts of observations $t=0, 1$ and 50.
    Shown are two dimensions of the learned latent representation $Z$ of $U_{s1}$.
    The (whitened) learned prior $p(Z\vert \hat h_t)$ is indicated by a red contour of one standard deviation. 
    The inferred mean embedding of the true levels are show as crosses, and their aggregated density is shown in blue.
    With increasing amount data that the model is conditioned on, the learned distributions match the data better.}
    \label{fig:posterior_vis}
\end{figure}

Here we provide some analysis of the DRAW model over the initial state $U_{s1}$, which is the learned part of the SCM $\model$ used for the policy evaluation experiments presented in \ref{sec:CF-PE_experiments}.
As detailed above, we trained a separate model $p(U_{s1}\vert \hat h_t)$ for each $t=1,\ldots, 50$ parameterizing the cardinality of the data the model is conditioned on.
We analyze three particular models for $t=0, 1$ and $50$ which we term the unconditional / filtering / smoothing model, as they are conditioned on no data / on data that is available at test time / all data that is available in hindsight.
Directly visualizing the distributions $p(U_{s1}\vert \hat h_t)$ for an analysis is difficult as the domain $\lbrace 0,\ldots, 6\rbrace^{10\times 10}$ is high-dimensional and discrete.
Instead we focus on the latent variables $Z$ which are learned by DRAW to represent this distribution; 
by construction, these are jointly Normal, facilitating the analysis.
In particular, we compare $p(Z\vert \hat h_t)$ with the inference distribution $q(Z\vert \hat u_{s1})$ conditioned on the true state $\hat U_{s1}$.
We loosely interpret $q(Z\vert \hat u_{s1})$ as the "true" embedding of the datum $\hat u_{s1}$, whereas $p(Z\vert \hat h_t)$ is the learned embedding.
In a perfect model the prior matches the inference distribution on average:
\begin{eqnarray*}
  \mathbb E_{\hat h_t\sim \mathfrak{p}^\mu}[p(Z\vert \hat h_t)] &\stackrel{!}{=}& \mathbb E_{\hat u_{s1}\sim \mathfrak{p}^\mu}[q(Z\vert \hat u_{s1})],
\end{eqnarray*}
\ie every sample from the prior $p(Z\vert \hat h_t)$ corresponds to real data and vice versa.
We visualize the averaged prior $\mathbb E_{\hat h_t\sim \mathfrak{p}^\mu}[p(Z\vert \hat h_t)]$ and the averaged posterior $\mathbb E_{\hat u_{s1}\sim \mathfrak{p}^\mu}[q(Z\vert \hat u_{s1})]$ in \figref{fig:posterior_vis}.
We show the two dimensions of $Z$ where these distributions have the largest KL divergence. 
Also, the plots were whitened \wrt the averaged prior, \ie the latter is a spherical Gaussian in the plots, represented by an iso-probability contour corresponding to one standard deviation. 
The inference distribution for each datum $\hat u_{s1}$ is visualized by its mean (cross) and a level set corresponding the  one standard deviation or less.
For the unconditional model $t=0$, we find that the distributions are not matched well. In particular, there is a lot of prior mass that sits in regions where there is no or little true data. 
In the RL setting this results in synthetic data from the model that is unrealistic and training a policy on this data leads to reduced test performance.
Also, as apparent from the figure, there is structure in the embedding of the true data, that is not captured at all by the prior. 
This effect is markedly reduced in the filtering posterior, indicating that the conditional distribution $p(Z\vert \hat h_1)$ already captures the data distribution better.
The smoothing model is a very good match to the data. With high probability, all tiles of the game are observed in $\hat h_{50}$, 
enabling the model to perfectly learn the belief state, which collapses in this setting to a single state.

\end{document}